\newlength\figureheight
\newlength\figurewidth
\newtheorem{theorem}{Theorem}
\newtheorem{lemma}{Lemma}
\newtheorem{cor}{Corollary}
\newcommand{\selfsparring}{{\sc\textsf{SelfSparring}}\xspace}
\newcommand{\multisparring}{{\sc\textsf{IndSelfSparring}}\xspace}
\newcommand{\kersparring}{{\sc\textsf{KernelSelfSparring}}\xspace}
\newcommand{\argmax}{\operatornamewithlimits{argmax}}
\newcommand{\K}{\mathcal{K}}
\title{Multi-dueling Bandits with Dependent Arms}
 \author{ {\bf Yanan Sui} \\
Caltech          \\
Pasadena, CA 91125 \\
ysui@caltech.edu \\
 \And
 {\bf Vincent Zhuang}  \\
Caltech          \\
Pasadena, CA 91125 \\
vzhuang@caltech.edu \\
 \And
 {\bf Joel W.~Burdick}   \\
Caltech         \\
Pasadena, CA 91125 \\
jwb@robotics.caltech.edu \\
 \And
 {\bf Yisong Yue}   \\
Caltech          \\
Pasadena, CA 91125 \\
yyue@caltech.edu \\
}          
\begin{document}

\maketitle

\begin{abstract} 
The dueling bandits problem is an online learning framework for learning from pairwise preference feedback, and is particularly well-suited for modeling settings that elicit subjective or implicit human feedback.
In this paper, we study the problem of \textit{multi-dueling bandits with dependent arms}, which extends the original dueling bandits setting by  simultaneously dueling multiple arms as well as modeling dependencies between arms.  These extensions capture key characteristics found in many real-world applications, and allow for the opportunity to develop significantly more efficient algorithms than were possible in the original setting.
We propose the \selfsparring algorithm, which reduces the multi-dueling bandits problem to a conventional bandit setting that can be solved using a stochastic bandit algorithm such as Thompson Sampling, and can naturally model dependencies using a Gaussian process prior. We  present a no-regret analysis for multi-dueling setting, and demonstrate the effectiveness of our algorithm empirically on a wide range of simulation settings.
\end{abstract}

\section{Introduction}

In many online learning settings, particularly those that involve human feedback, reliable feedback is often limited to pairwise preferences (e.g., ``is A better than B?''). Examples include implicit or subjective feedback for information retrieval and various recommender systems  \citep{chapelle2012large,sui2014clinical}.  This setup motivates the dueling bandits problem \citep{yue2012k}, which formalizes the problem of online regret minimization via preference feedback.

The original dueling bandits setting ignores many real world considerations. For instance, in personalized clinical recommendation settings \citep{sui2014clinical}, it is often more practical for subjects to provide preference feedback on several actions (or treatments) simultaneously rather than just two.  Furthermore, the action space can be very large, possibly infinite, but often has a low-dimensional dependency structure.

In this paper, we address both of these challenges in a unified framework, which we call \textit{multi-dueling bandits with dependent arms}.  We extend the original dueling bandits problem by simultaneously dueling multiple arms as well as modeling dependencies between arms using a kernel.  Explicitly formalizing these real-world characteristics provides an opportunity to develop principled algorithms that are much more efficient than algorithms designed for the original setting.  For instance, most dueling bandits algorithms suffer regret that scales linearly with the number of arms, which is not practical when the number of arms is very large or infinite.  

For this setting, we propose the \selfsparring algorithm, inspired by the Sparring algorithm from \cite{ailon2014reducing}, which algorithmically reduces the multi-dueling bandits problem into a conventional muilti-armed bandit problem that can be solved using a stochastic bandit algorithm such as Thompson Sampling \citep{chapelle2011empirical,russo2014learning}. Our approach can naturally incorporate dependencies using a Gaussian process prior with an appropriate kernel.

While there have been some prior work on multi-dueling \citep{brost2016multi} and learning from pairwise preferences over kernels \citep{gonzalez2016bayesian}, to the best of our knowledge, our approach is the first to address to both in a unified framework.  We are also the first to provide a regret analysis of the multi-dueling setting.  We further demonstrate the effectiveness of our approach over conventional dueling bandits approaches in a wide range of simulation experiments.

\section{Background}

\subsection{Dueling Bandits}
\label{sec:db}
The original dueling bandits problem is a sequential optimization problem with relative feedback. 
Let $\mathcal{B} = \{b_1,\ldots,b_K\}$ be the set of $K$ bandits (or arms). At each iteration, the algorithm duels or compares a single pair of arms $b_i, b_j$ from the set of $K$ arms ($b_i$ and $b_j$ can be identical). The outcome of each duel between $b_i$ and $b_j$ is an independent sample of a Bernoulli random variable. We define the probability that arm $b_i$ beats $b_j$ as: $$P(b_i \succ b_j) = \phi(b_i,b_j) + 1/2,$$
where $\phi(b_i,b_j)\in [-1/2,1/2]$ denotes the stochastic preference between $b_i$ and $b_j$, thus $b_i \succ b_j \Leftrightarrow \phi(b_i,b_j) > 0$.
We assume there is a total ordering, and WLOG that $b_i \succ b_j \Leftrightarrow i < j$.

The setting proceeds in a sequence of iterations or rounds. At each iteration $t$, the decision maker must choose a pair of bandits $b_t^{(1)}$ and $b_t^{(2)}$ to compare, and observes the outcome of that comparison. The quality of the decision making is then quantified using a notion of cumulative regret of $T$ iterations:
\begin{eqnarray}
R_T = \sum_{t=1}^T \left[ \phi(b_1,b_t^{(1)}) + \phi(b_1, b_t^{(2)})\right].\label{eqn:regret}
\end{eqnarray}
When the algorithm has converged to the best arm $b_1$, then it can simply duel $b_1$ against itself, thus incurring no additional regret.
In the recommender systems setting, one can interpret \eqref{eqn:regret} as the how much the user(s) would have preferred the best bandit over the the ones presented by the algorithm.  



To date, there have been several algorithms proposed for the stochastic dueling bandits problem, including Interleaved Filter \citep{yue2012k}, Beat the Mean \citep{yue2011beat}, SAVAGE \citep{urvoy2013generic}, RUCB \citep{zoghi2014relative,zoghi2015mergerucb}, Sparring \citep{ailon2014reducing,dudik2015contextual}, RMED \citep{komiyama2015regret}, and DTS \citep{wu2016doublets}.  Our proposed approach, \selfsparring, is inspired by Sparring, which along with RUCB-style algorithms are the best performing methods.  In contrast to Sparring, which has no theoretical guarantees, we provide no-regret guarantees for \selfsparring, and demonstrate significantly better performance in the multi-dueling setting. 

Previous work on extending the original dueling bandits setting have been largely restricted to settings that duel a single pair of arms at a time. These include continuous-armed convex dueling bandits \citep{yue2009interactively}, contextual dueling bandits which also introduces the von Neumann winner solution concept \citep{dudik2015contextual}, sparse dueling bandits that focuses on the Borda winner solution concept \citep{jamieson2015sparse}, Copeland dueling bandits that focuses on the Copeland winner solution concept \citep{zoghi2015copeland}, and adversarial dueling bandits \citep{gajane2015relative}.
In contrast, our work studies the complementary directions of how to formalize multiple duels simultaneously, as well as how to reduce the dimensionality of modeling the action space using a low-dimensional similarity kernel. 

Recently, there have been increasing interest in studying personalization settings that simultaneously elicit multiple pairwise comparisons.  Example settings include information retrieval \citep{hofmann2011probabilistic,schuth2014multileaved,schuth2016multileave} and clinical treatment \citep{sui2014clinical}. There have also been some previous work on multi-dueling bandits settings  \citep{brost2016multi,sui2014clinical,schuth2016multileave}, however the previous approaches are limited in their scope and lack rigorous theoretical guarantees.  In contrast, our approach can handle a wide range of multi-dueling mechanisms, has near-optimal regret guarantees, and can be easily composed with kernels to model dependent arms.


\subsection{Multi-armed Bandits}
Our proposed algorithm,
\selfsparring, utilizes a  multi-armed bandit (MAB) algorithm as a subroutine, and so we provide here a brief formal description of the conventional MAB problem for completeness.
The stochastic MAB problem \citep{robbins52} refers to an iterative decision making problem where the algorithm repeatedly chooses among K actions (or bandits or arms).  In contrast to the dueling bandits setting, where the feedback is relative between two arms, here, we receive an absolute reward that depends on the arm selected. We assume WLOG that every reward is bounded between $[0,1]$.\footnote{So long as the rewards are bounded, one can shift and re-scale them to fit within $[0,1]$.} The goal then is to minimize the cumulative regret compared to the best arm:
\begin{eqnarray}
R_T^{\text{MAB}} = \sum_{t=1}^T \left[\mu^1 - \mu(b_t)\right],
\label{eqn:mab_regret}
\end{eqnarray}
where $b_t$ denotes the arm chosen at time $t$, $\mu(b)$ denotes the expected reward of arm $b$, and $\mu^1 = \argmax_b \mu(b)$.
Popular algorithms for the stochastic setting include UCB (upper confidence bound) algorithms \citep{auer2002finite}, and Thompson Sampling
\citep{chapelle2011empirical,russo2014learning}.  

In the adversarial setting, the rewards are chosen in an adversarial fashion, rather than sampled independently from some underlying distribution.  In this case, regret \eqref{eqn:mab_regret} is rephrased as the difference in the sum of rewards. The predominant algorithm for the adversarial setting is EXP3 \citep{auer2002nonstochastic}.

\subsection{Thompson Sampling}
\label{sec:ts}

The specific MAB algorithm used by our \selfsparring approach is Thompson Sampling.
Thompson Sampling is a stochastic algorithm that maintains a distribution over the arms, and chooses arms by sampling  \citep{chapelle2011empirical}. This distribution is updated using reward feedback.  The entropy of the distribution thus corresponds to  uncertainty regarding which is the best arm, and flatter distributions lead to more exploration.

\newcommand{\muh}{\hat{\mu}}

\begin{algorithm}[t]
    \caption{Thompson Sampling for Bernoulli Bandits}
    \label{alg:ts}
{
\begin{algorithmic}[1]
	\STATE For each arm $i=1,2,\cdots, K$, set $S_i=0$, $F_i=0$.
    \FOR{$t=1,2,\ldots $}
    	\STATE For each arm $i=1,2,\cdots, K$, sample $\theta_{i}$ from $Beta(S_i+1,F_i+1)$
        \STATE Play arm $i(t) := \argmax_i{\theta_i(t)}$, observe reward $r_t$
        \STATE $S_i \leftarrow S_i + r_t$, $F_i \leftarrow F_i + 1 - r_t$
    \ENDFOR
\end{algorithmic}
}
\end{algorithm}

Consider the Bernoulli bandits setting where observed rewards are either 1 (win) or 0 (loss).  Let $S_i$ and $F_i$ denote the historical number of wins and losses of arm $i$, and let $D_t$ denote the set of all parameters at round $t$:
$$D_t= \{ S_1, \cdots, S_K; F_1, \cdots, F_K\}_t.$$
For brevity, we often represent $D_t$ by $D$, since only the current iteration matters at run-time. The sampling process of Beta-Bernoulli Thompson Sampling given $D$ is:
\begin{itemize}
\vspace{-0.1in}
\item For each arm $i$, sample $\theta_i \sim Beta(S_i+1,F_i+1)$.
\vspace{-0.1in}
\item Choose the arm with maximal $\theta_i$.
\end{itemize}
In other words, we model the average utility of each arm using a Beta prior, and rewards for arm $i$ as Bernoulli distributed according to latent mean utility $\theta_i$.  As we observe more rewards, we can compute the posterior, which is also Beta distributed by conjugation between Beta and Bernoulli.  The sampling process above can be shown to be sampling for the following distribution:
\begin{eqnarray}
P(i|D) = P(i =\argmax_b \theta_b|D).
\label{eqn:ts}
\end{eqnarray}
Thus, any arm $i$ is chosen with probability that it has  maximal reward under the Beta posterior. Algorithm~\ref{alg:ts} describes the Beta-Bernoulli Thompson Sampling algorithm, which we use as a subroutine for our approach.
Thompson Sampling enjoys near-optimal regret guarantees in the stochastic MAB setting, as given by the lemma below (which is a direct consequence of main theorems in \citet{agrawal2012,kaufmann2012}).

\begin{lemma}
	\label{lem:ts}
For the K-armed stochastic MAB problem, Thompson Sampling has expected regret:
$\mathbb{E}[R_T^{\text{MAB}}] = \mathcal{O}\left(\frac{K}{\Delta}\ln T \right)$, where $\Delta$ is the difference between expected rewards of the best two arms.
\end{lemma}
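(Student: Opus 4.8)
The plan is to reduce the regret to a sum over suboptimal arms of their expected pull counts, and then invoke the known per-arm guarantees for Thompson Sampling. Writing $\Delta_i = \mu^1 - \mu^i$ for the suboptimality gap of arm $i$ and $N_i(T)$ for the number of times arm $i$ is played during the first $T$ rounds, the reward formulation in \eqref{eqn:mab_regret} rewrites directly as
\begin{equation}
\mathbb{E}[R_T^{\text{MAB}}] = \sum_{i \neq 1} \Delta_i\, \mathbb{E}[N_i(T)],
\end{equation}
since the best arm contributes zero instantaneous regret and exactly one arm is played per round. It therefore suffices to control $\mathbb{E}[N_i(T)]$ for each suboptimal arm, and the whole argument hinges on showing that these counts grow only logarithmically.

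First I would invoke the main results of \citet{agrawal2012,kaufmann2012}, which establish that Beta-Bernoulli Thompson Sampling (Algorithm~\ref{alg:ts}) pulls each suboptimal arm $i$ only logarithmically often, $\mathbb{E}[N_i(T)] = \mathcal{O}(\ln T / \Delta_i^2)$. Substituting this bound into the decomposition above cancels one factor of $\Delta_i$, so that each arm's contribution to the regret is $\Delta_i\, \mathbb{E}[N_i(T)] = \mathcal{O}(\ln T / \Delta_i)$.

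Finally I would pass to a worst-case gap. Because $\Delta$ is defined as the gap between the best two arms, it is the smallest of all the suboptimality gaps, hence $\Delta_i \ge \Delta$ and $1/\Delta_i \le 1/\Delta$ for every $i \neq 1$. Summing the $K-1$ suboptimal terms then gives
\begin{equation}
\mathbb{E}[R_T^{\text{MAB}}] = \sum_{i \neq 1} \mathcal{O}\!\left(\frac{\ln T}{\Delta_i}\right) \le (K-1)\,\mathcal{O}\!\left(\frac{\ln T}{\Delta}\right) = \mathcal{O}\!\left(\frac{K}{\Delta}\ln T\right),
\end{equation}
which is exactly the claimed bound. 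The only genuinely difficult step is the per-arm logarithmic pull bound, and this is precisely the content of the cited theorems; everything else is the standard regret decomposition together with the crude but convenient replacement of each individual gap by the minimal gap $\Delta$. The mild cost of this last simplification is that it loosens the sharper problem-dependent quantity $\sum_{i\neq 1} 1/\Delta_i$ into the uniform factor $K/\Delta$, which is all that the stated lemma requires for its use as a subroutine guarantee in our analysis.
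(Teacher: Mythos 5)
Your proposal is correct and takes essentially the same route as the paper: the paper's proof is simply a one-line citation of Theorem~2 of \citet{agrawal2012} and Theorem~1 of \citet{kaufmann2012}, and your argument just makes explicit the standard glue (regret decomposition into $\sum_{i\neq 1}\Delta_i\,\mathbb{E}[N_i(T)]$, the cited per-arm logarithmic pull bounds, and the replacement of each $\Delta_i$ by the minimal gap $\Delta$) that the paper leaves implicit. No gap; your write-up is a more self-contained version of the same citation-based proof.
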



\subsection{Gaussian Processes \& Kernels}
\label{sec:gp}

 Normally, when one observes measurements about one arm (in both dueling bandits and conventional multi-armed bandits), one cannot use that measurement to infer anything about other arms -- i.e., the arms are independent.  This limitation necessarily implies that regret scales linearly w.r.t. the number of arms $K$, since each arm must be explored at least once to collect at least one measurement about it.  We will use Gaussian processes and kernels to model dependencies between arms.

For simplicity, we present Gaussian processes in the context of multi-armed bandits.  We will describe how to apply them to multi-dueling bandits in Section \ref{sec:problem}
A Gaussian process (GP) is a probability measure over functions such that any linear restriction is multivariate Gaussian. A GP is fully determined by its mean and a positive definite covariance operator, also known as a kernel. 
A $GP(\mu(b),k(b, b'))$ is a probability distribution across a class of ``smooth'' functions, which is parameterized by a kernel function $k(b, b')$ that characterizes the smoothness of $f$.  One can think of $f$ has corresponding to the reward function in the standard MAB setting. 

We assume WLOG~that $\mu(b)=0$, and that our observations are perturbed by i.i.d.~Gaussian noise, i.e., for samples at points $A_T=[b_1 \dots b_T]$, we have $y_t = f(b_t) + n_t$ where $n_t \sim T(0, \sigma^2)$ (e will relax this later). The posterior over $f$ is then also Gaussian with mean $\mu_T( b)$, covariance $k_T(b, b)$ and variance $\sigma_T^2(b, b')$ that satisfy:
\begin{align*}
\mu_T(b) &= k_T(b)^T(\K_T + \sigma^2I)^{-1}y_T\\
k_T(b, b') &= k(b, b') - k_T(x)^T(\K_T + \sigma^2I)^{-1}k_T(b')\\
\sigma_T^2(b) &= k_T(b, b),
\end{align*}
where $k_T(b) = [k(b_1, b) \dots k(b_T, b)]^T$ and $\K_T$ is the positive definite kernel matrix $[k(x, x')]_bx, b' \in A_T]$.

Posterior inference updates the mean reward estimates for all the arms that share dependencies (as specified by the kernel) with the arms selected for measurement.  Thus one can show that MAB algorithms using Gaussian processes have regret that scale linearly w.r.t. the dimensionality of the kernel rather than the number of arms (which can now be infinite) \citep{srinivas10}.


\section{Multi-dueling Bandits}
\label{sec:problem}

We now formalize the multi-dueling bandits problem. We inherit all notation from original dueling bandits setting (Section \ref{sec:db}).
The key difference is that the algorithm now selects a (multi-)set $S_t$ of arms at each iteration $t$, and observes outcomes of duels between some pairs of arms in $S_t$. For example, in information retrieval this can be implemented via multi-leaving \citep{schuth2014multileaved} the ranked lists of the subset, $S_t$, of rankers and then inferring the relative quality of the lists (and the corresponding rankers) from user feedback. 

In general, we assume the number of arms being dueled at each iteration is some fixed constant $m = |S_t|$.  When $m=2$, the problem reduces to the original dueling bandits setting. Extending the regret formulation from the original setting \eqref{eqn:regret}, we can write the regret as:
\begin{eqnarray}
R_T = \sum_{t=1}^T \sum_{b\in S_t} \phi(b_1,b).\label{eqn:regret2}
\end{eqnarray}


The goal then is to select subsets of arms ${S_t}$ so that the cumulative regret \eqref{eqn:regret2} is minimized. Intuitively, all arms have to be selected a small number of times in order to be explored, but the goal of the algorithm is to minimize the number of times when suboptimal arms are selected.
When the algorithm has converged to the best arm $b_1$, then it can simply choose $S_t$ to only contain $b_1$, thus incurring no additional regret.

Our setting differs from \citet{brost2016multi} in two ways.  First, we play a fixed, rather than variable, number of arms at each iteration. Furthermore, we focus on total regret, rather than the instantaneous average regret in a single iteration; in many applications (e.g., \citet{sui2014clinical}), playing each arm incurs its own regret .

\textbf{Feedback Mechanisms.} Simultaneously dueling multiple arms opens up multiple options for collecting feedback. For example, in some applications it may be viable to collect all pairwise feedback for all chosen arms  $S_t$.  In other applications, it is more realistic to only observe the ``winner'' of $S_t$, in which we observe feedback that one $b\in S_t$ wins against all other arms in $S_t$, but nothing about pairwise preferences between the other arms.  


\textbf{Approximate Linearity.}
One assumption that we leverage in developing our approach is \textit{approximate linearity}, which fully generalizes the linear utility-based dueling bandits setting studied in \citet{ailon2014reducing}.
For any triplet of bandits $b_i \succ b_j \succ b_k$ and some constant $\gamma > 0$: 
\begin{eqnarray}
\phi(b_i,b_k) - \phi(b_j,b_k) \geq \gamma\phi(b_i,b_j).
\label{eqn:al}
\end{eqnarray}

To understand Approximate Linearity, consider the special case when the preference function follows the form $\phi(b_i, b_j) = \Phi(u_i - u_j)$, where $u_i$ is a bounded utility measure of $b_i$. Approximate linearity of $\phi(\cdot, \cdot)$ is equivalent to having $\Phi(\cdot)$ be not far from some linear function on its bounded support (see Figure \ref{fig:al}), and is satisfied by any continuous monotonic increasing function.  When $\Phi$ is linear, then our setting reduces to the utility-based dueling bandits setting of \citet{ailon2014reducing}.\footnote{Compared to the assumptions of \cite{yue2012k}, Approximate Linearity is a stricter requirement than strong stochastic transitivity, and is a complementary requirement to stochastic triangle inequality. In particular, stochastic triangle inequality requires that the curve in Figure \ref{fig:al} exhibits diminishing returns in the top-right quadrant (i.e., is sub-linear), whereas Approximate Linearity requires that the curve be not too far from linear.}

\begin{figure}[t]
\centering
\includegraphics[width=0.92\columnwidth]{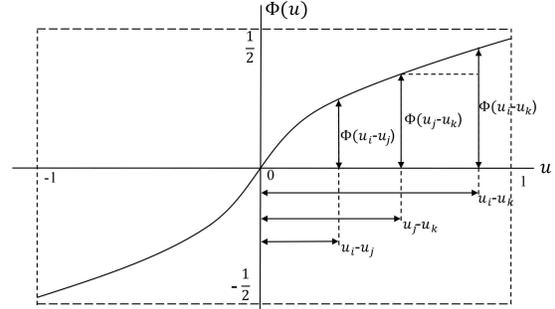}
\vspace{-0.2in}
\caption{Illustration of Approximate Linearity. The curve represents $\Phi(\cdot)$ with support on $[-1, 1]$.  Monotonicity guarantees Approximate Linearity for some $\gamma$.}
\label{fig:al}
\vskip -0.2in
\end{figure}

\section{Algorithms \& Results}
\label{sec:algorithm}

We start with a high-level description of our general framework, called \selfsparring, which is inspired by the Sparring algorithm from \citet{ailon2014reducing}.  The high-level strategy is to reduce the multi-dueling bandits problem to a multi-armed bandit (MAB) problem that can be solved using a MAB algorithm, and ideally lift existing MAB guarantees to the multi-dueling setting.

Algorithm \ref{alg:ss} describes the \selfsparring approach. 
\selfsparring uses a stochastic MAB algorithm such as Thompson sampling as a subroutine to independently sample the set of $m$ arms, $S_t$ to duel.  The distribution of $S_t$ is generally not degenerate (e.g., all the same arm) unless the algorithm has converged.   In contrast, the Sparring algorithm uses $m$ MAB algorithms to control the choice of the each arm, which essentially reduces the conventional dueling bandits problem to two multi-armed bandit problems ``sparring'' against each other. 


 \begin{algorithm}[tb]
     \caption{\selfsparring}
     \label{alg:ss}
 \begin{algorithmic}[1]
     \INPUT arms $1, \ldots, K$ in space $S$, $m$ the number of arms drawn at each iteration, $\eta$ the learning rate
     \STATE Set prior $D_0$ over $S$
     \FOR{$t=1,2,\ldots $}
       \FOR{$j = 1, \ldots, m$}
         \STATE select arm $i_j(t)$ using $D_{t-1}$ \label{lin:sample}
 	   \ENDFOR
	   \STATE Play $m$ arms $\{i_j(t)\}_j$ and observe $m\times m$ pairwise feedback matrix $R = \{r_{ij} \in \{0,1,\emptyset\}\}_{m \times m}$
       \STATE update $D_{t-1}$ using $R$ to obtain $D_t$
     \ENDFOR
 \end{algorithmic}
 \end{algorithm}

\selfsparring takes as input $S$ the total set of arms, $m$ the number of arms to be dueled at each iteration, and $\eta$ the learning rate for posterior updates. $S$ can be a finite set of $K$ arms for independent setting, or a  continuous action space of arms for kernelized setting. A prior distribution $D_0$ is used to initialize the sampling process over $S$. In the $t$-th iteration, \selfsparring selects $m$ arms by  sampling over the distribution $D_{t-1}$ as shown in line~\ref{lin:sample} of Algorithm~\ref{alg:ss}. The preference feedback can be any type of comparisons ranging from full comparison over the $m$ arms (a full matrix for $R$, aka `all pairs'') to single comparison of one pair (just two valid entries in $R$). The posterior distribution over arms $D_t$ then gets updated by $R$ and the prior $D_{t-1}$.

We specialize \selfsparring in two ways.  The first, \multisparring (Algorithm~\ref{alg:ms}), is the independent-armed version of \selfsparring.  The second,  \kersparring (Algorithm~\ref{alg:ks}), uses Gaussian processes to make predictions about preference function $f$ based on noisy evaluations over comparisons. We emphasize here that \selfsparring is very modular approach, and is thus easy to implement and extend.

\subsection{Independent Arms Case}
\label{sec:multisparring}
\multisparring (Algorithm \ref{alg:ms}) instantiates \selfsparring using Beta-Bernoulli Thompson sampling.  
The posterior Beta distributions  $D_t$ over the arms are updated by the preference feedback within the iteration and the prior Beta distributions $D_{t-1}$.

We present a no-regret guarantee of \multisparring in Theorem \ref{thm:ms} below.  We now provide a high-level outline of the main components leading to the result.  Detail proofs are deferred to the supplementary material.


Our first step is to prove that \multisparring is asymptotically consistent, i.e., it is guaranteed (with high probability) to converge to the best bandit.  In order to guarantee consistency, we first show that all arms are sampled infinitely often in the limit.

\begin{lemma}
\label{lem:io}
Running \multisparring with infinite time horizon will sample each arm infinitely often.
\end{lemma}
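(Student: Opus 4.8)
The plan is to argue by contradiction, combining the full support of the Beta posteriors with a conditional (Lévy) version of the Borel--Cantelli lemma. Because \multisparring draws each of the $m$ arms independently from the current Beta-Bernoulli posterior, and every $Beta(S_i+1,F_i+1)$ has strictly positive density on all of $(0,1)$, each arm is selected with strictly positive conditional probability at every round. Writing $p_{i,t}$ for the conditional probability, given the history $\mathcal{F}_{t-1}$, that arm $i$ is drawn into at least one slot at round $t$, Lévy's extension of Borel--Cantelli identifies the event $\{$arm $i$ sampled infinitely often$\}$ with $\{\sum_t p_{i,t} = \infty\}$ up to a null set. It therefore suffices to show $\sum_t p_{i,t} = \infty$ almost surely, for each of the finitely many arms.

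Suppose for contradiction that on an event $E_i$ of positive probability arm $i$ is sampled only finitely often. On $E_i$ the statistics $S_i,F_i$ are eventually frozen at finite values, so after some time $T_0$ arm $i$ has a fixed posterior $\nu_i = Beta(S_i+1,F_i+1)$ whose survival function $\bar F_{\nu_i}(x) = \mathbb{P}(\theta_i > x)$ is strictly positive for every $x < 1$. Using independence of the $m$ draws, I would lower bound $p_{i,t} \geq \bar F_{\nu_i}(1-\epsilon)\,\mathbb{P}(\max_{k\neq i}\theta_k \leq 1-\epsilon \mid \mathcal{F}_{t-1})$ for any fixed $\epsilon>0$. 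Since $\bar F_{\nu_i}(1-\epsilon)$ is a fixed positive constant, the contradiction reduces to showing that $\max_{k\neq i}\theta_k$ does not converge to $1$ in probability; concretely, that there exist $\epsilon,c>0$ and infinitely many $t$ with $\mathbb{P}(\max_{k\neq i}\theta_k \leq 1-\epsilon)\geq c$. Summing the constant $c$ over those rounds already forces $\sum_t p_{i,t} = \infty$, contradicting $E_i$.

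The remaining step, which I expect to be the \textbf{main obstacle}, is to rule out that any arm drives its sampled value to $1$. Arms that are themselves sampled finitely often have frozen Betas and keep mass bounded away from $1$, so the only danger is an arm $k$ sampled infinitely often whose empirical win rate $S_k/(S_k+F_k)$ tends to $1$. I would exclude this using the self-comparison dynamics of \multisparring: if $\theta_k$ concentrated near $1$, then because all $m$ slots are drawn from the \emph{same} posterior, arm $k$ would be selected into two or more slots with probability tending to $1$ and would duel itself, and since $\phi(b,b)=0$ these self-duels are fair coin flips whose losses keep $S_k/(S_k+F_k)$ bounded away from $1$. More generally, any arm dueled infinitely often faces some fixed opponent (itself or another comparable arm) infinitely often, and unless that opponent is beaten with probability exactly one a positive fraction of losses accrues; the degenerate ``beats-with-probability-one'' case is itself precluded, since concentration of $\theta_k$ near $1$ triggers the very self-duels that inject losses. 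This yields $\limsup_t \max_k S_k/(S_k+F_k) < 1$, producing the required $\epsilon$ and completing the contradiction.
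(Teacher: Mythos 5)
Your proposal is correct, and at its core it follows the same contradiction skeleton as the paper: assume some arm is sampled only finitely often, observe that its Beta posterior is then frozen and places strictly positive mass above any $x<1$, and conclude that the arm must nevertheless win the argmax again. However, you go substantially further than the paper does, and the extra work is genuinely needed. The paper's proof stops at the observation that the per-round probability of the frozen arm beating all others, lower-bounded via $P(\theta_b \geq \max_i \theta_{b_i}) \geq \prod_i P(\theta_b \geq \theta_{b_i})$, is strictly positive, and treats this as an immediate contradiction with the finite sampling bound. But per-round positivity alone does not imply infinite sampling: the competing arms' posteriors evolve, and if some arm's posterior drifted toward $1$ the relevant conditional probabilities could be summable, in which case the frozen arm could indeed stop being selected. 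Your two additions repair exactly this: (i) you invoke L\'evy's conditional Borel--Cantelli lemma, which is the right tool to convert divergence of conditional selection probabilities into almost-sure infinite sampling; and (ii) you rule out the drift-to-one scenario via the self-duel mechanism --- an arm whose posterior concentrates near $1$ fills multiple slots with high probability, duels itself, and since $\phi(b,b)=0$ these self-duels are fair coin flips that keep its empirical win rate $S_k/(S_k+F_k)$ bounded away from $1$. Neither ingredient appears in the paper, whose proof is, as written, incomplete on precisely these points, so your route buys rigor where the paper offers only a sketch. One caveat on your side: the final step is itself still a sketch --- you would need a quantitative martingale or law-of-large-numbers argument to establish $\limsup_t \max_k S_k/(S_k+F_k) < 1$ from the self-duel dynamics, and you should note that the mechanism relies on $m \geq 2$ and on at least one pairwise comparison being observed per round --- but the mechanism you identify is sound and the details are routine to fill in.
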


In other words, Thompson sampling style algorithms do not eliminate any arms. 
Lemma~\ref{lem:io} also guarantees concentration of any statistical estimates for each arm as $t\rightarrow \infty$.
We next show that the sampling of \multisparring will concentrate around the optimal arm.  

\begin{theorem}
\label{thm:conv}
Under Approximate Linearity, \multisparring converges to the optimal arm $b_1$ as running time $t\rightarrow \infty$: $\lim_{t\rightarrow \infty} \mathbb{P}(b_t = b_1) = 1$.
\end{theorem}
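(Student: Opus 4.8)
The plan is to show that the Thompson Sampling posterior concentrates around the optimal arm $b_1$, and that the \multisparring selection probability for $b_1$ therefore tends to $1$. First I would invoke Lemma~\ref{lem:io} to guarantee that every arm is sampled infinitely often, so that the number of pulls $N_i(t)\to\infty$ for all $i$. The crucial point is to identify the right notion of ``reward'' that Beta-Bernoulli Thompson Sampling is implicitly estimating. In \multisparring, arm $i$ collects wins and losses from its duels against whichever other arms are drawn in the same iteration, so the empirical win-rate that $S_i,F_i$ track is not $\phi(b_1,b_i)$ directly but a mixture $\sum_j p_j(t)\,(\phi(b_i,b_j)+1/2)$, where $p_j(t)$ is the current probability of drawing opponent $j$. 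I would define $\mu_i(t)$ as this expected per-duel win probability of arm $i$ against the current sampling distribution, and argue that by the standard Beta-Bernoulli posterior concentration (e.g. as used in \citet{agrawal2012,kaufmann2012}), with the number of observations growing, the sampled $\theta_i$ concentrates around $\mu_i(t)$ with high probability.

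Next I would establish that, against any sampling distribution, the optimal arm $b_1$ has the highest expected win probability, so that its posterior mean dominates. This is exactly where Approximate Linearity enters: for any opponent distribution supported on arms $\{b_j\}$, the gap $\mu_1(t)-\mu_i(t)=\sum_j p_j(t)\big(\phi(b_1,b_j)-\phi(b_i,b_j)\big)$ can be lower-bounded using \eqref{eqn:al}. Since $b_1 \succ b_i \succ b_j$ (or the appropriate ordering) gives $\phi(b_1,b_j)-\phi(b_i,b_j)\geq \gamma\,\phi(b_1,b_i)>0$ for suboptimal $i$, the approximate linearity assumption ensures a uniform positive separation $\mu_1(t)-\mu_i(t)\geq \gamma\,\phi(b_1,b_i)>0$ that does \emph{not} vanish as the distribution shifts. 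Without this assumption the separation could collapse when the sampling mass concentrates on a single opponent, so this is the step that makes the self-sparring reduction sound.

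I would then combine the two pieces: each arm is sampled infinitely often (Lemma~\ref{lem:io}), so each posterior concentrates; the posterior means are uniformly separated with $b_1$ on top (Approximate Linearity); hence with high probability $\theta_1(t)>\theta_i(t)$ for all $i\neq 1$ eventually, and therefore $\argmax_i \theta_i(t)=b_1$. Formally I would fix $\epsilon>0$ smaller than half the minimum gap $\tfrac{1}{2}\gamma\min_{i\neq 1}\phi(b_1,b_i)$, use a concentration bound to show $\mathbb{P}(|\theta_i(t)-\mu_i(t)|>\epsilon)$ is summable or tends to zero once $N_i(t)$ is large, and conclude $\lim_{t\to\infty}\mathbb{P}(b_t=b_1)=1$ via a union bound over the finitely many arms.

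The main obstacle I anticipate is the coupling between the sampling distribution and the reward signal: the mean $\mu_i(t)$ that each arm's posterior tracks is itself a moving target that depends on the evolving distribution $D_{t-1}$, so the observations are not i.i.d.\ draws from a fixed Bernoulli. Handling this non-stationarity rigorously requires showing that the separation bound from Approximate Linearity holds \emph{uniformly} over all possible opponent distributions, which decouples the argument from the particular trajectory of $D_t$; establishing that the posterior concentration results of \citet{agrawal2012,kaufmann2012} still apply in this adaptively-sampled, drifting-mean regime is the delicate part, and I would likely need to argue that the instantaneous gaps stay bounded away from zero regardless of how the distribution evolves.
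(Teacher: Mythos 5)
There is a genuine gap, and it sits exactly at the step you yourself flagged as ``delicate.'' Your argument needs the sampled $\theta_i(t)$ to concentrate around $\mu_i(t)$, the win rate of arm $b_i$ against the \emph{current} selection distribution. But in \multisparring the Beta posterior of arm $b_i$ tracks the \emph{historical average} win rate over the iterations at which arm $b_i$ was actually played, and different arms are played at different, adaptively chosen sets of iterations. Uniform separation of the instantaneous means (i.e., $\mu_1(p)-\mu_i(p)\geq g>0$ for every fixed opponent distribution $p$) therefore does not transfer to the quantities the posteriors actually estimate: a suboptimal arm that happened to be played mostly while the opponent distribution was weak can carry a posterior mean well above that of $b_1$, even though at every single instant its mean against the then-current distribution was smaller. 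So the punchline ``hence $\theta_1(t)>\theta_i(t)$ eventually, by concentration plus a union bound'' does not follow; the ``decoupling'' you invoke decouples the wrong quantity. (A smaller issue: your bound $\phi(b_1,b_j)-\phi(b_i,b_j)\geq\gamma\,\phi(b_1,b_i)$ only follows from \eqref{eqn:al} when $b_i\succ b_j$; when $b_1\succ b_j\succ b_i$ you must argue separately, though positivity over finitely many arms still yields some uniform gap.)

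What actually makes the theorem true is a self-correction dynamic that a static argument never uses: when an over-rated arm $b_i$ dominates the selection, it mostly duels itself and earns win rate $1/2$, dragging its posterior mean toward $1/2$, while $b_1$ (played infinitely often by Lemma~\ref{lem:io}) earns at least $1/2+\phi(b_1,b_i)$ against that environment, so the ordering of posterior means must eventually flip. The paper's proof captures this with a fixed-point/stability argument in the $K$-dimensional space of posterior means: it shows that $\mu=\{\mu_i=P(b_i\succ b_1)\}$ is the unique stable point of the sampling dynamics (ruling out any other candidate $\nu$ via the two cases $\nu_1=\max_i\nu_i$ and $\nu_1<\max_i\nu_i$), and then argues that a non-convergent trajectory of selection probabilities would contradict uniqueness of that stable point; relatedly, Corollary~\ref{cor:conv} handles Thompson sampling against a drifting-but-converging environment. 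To repair your proof you would have to replace ``the posterior concentrates around $\mu_i(t)$'' with an analysis of these lagged, trajectory-dependent averages---which essentially amounts to reintroducing the dynamical argument the paper uses.
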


\begin{algorithm}[tb]
     \caption{\multisparring}
     \label{alg:ms}
 \begin{algorithmic}[1]
 	 \INPUT $m$ the number of arms drawn at each iteration, $\eta$ the learning rate
     \STATE For each arm $i=1,2,\cdots, K$, set $S_i=0$, $F_i=0$.
     \FOR{$t=1,2,\ldots $}
     	\FOR{$j = 1, \ldots, m$}
          \STATE For each arm $i=1,2,\cdots, K$, sample $\theta_{i}$ from $Beta(S_i+1,F_i+1)$
          \STATE Select $i_j(t) := \argmax_i{\theta_i(t)}$
    	\ENDFOR
        \STATE Play $m$ arms $\{i_j(t)\}_j$, observe pairwise feedback matrix $R = \{r_{jk} \in \{0,1,\emptyset\}\}_{m \times m}$
        \FOR{$j,k = 1, \ldots, m$}
          \IF{$r_{jk} \neq \emptyset$}
          \STATE $S_j \leftarrow S_j + \eta\cdot r_{jk}$, 
          $F_j \leftarrow F_j + \eta(1 - r_{jk})$
          \ENDIF
        \ENDFOR
     \ENDFOR
 \end{algorithmic}
 \end{algorithm}

\begin{figure*}[t!]
\centering
\subfloat[$5$ iterations]{\includegraphics[width=0.33\textwidth]{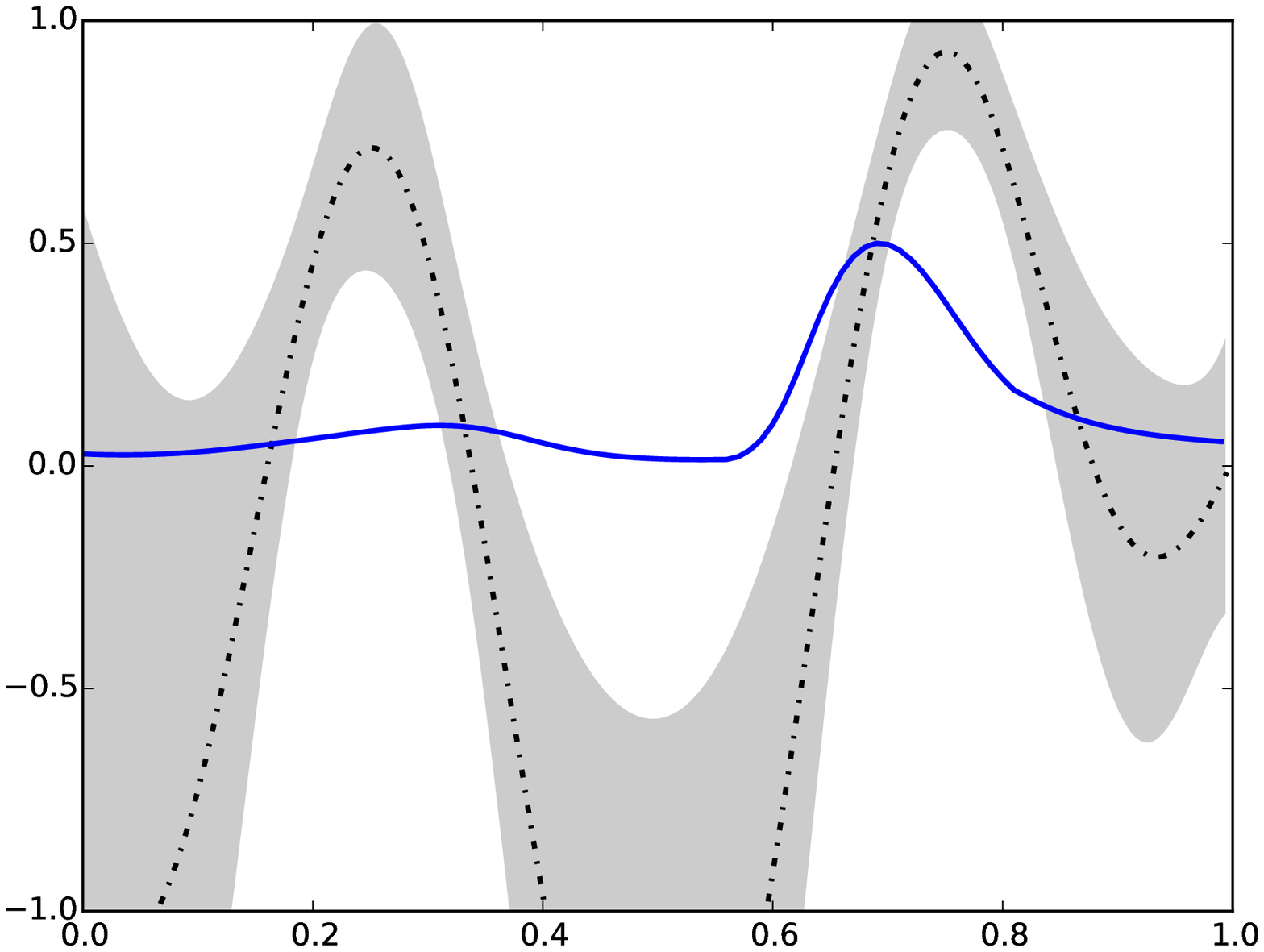}}
\subfloat[$20$ iterations]{\includegraphics[width=0.33\textwidth]{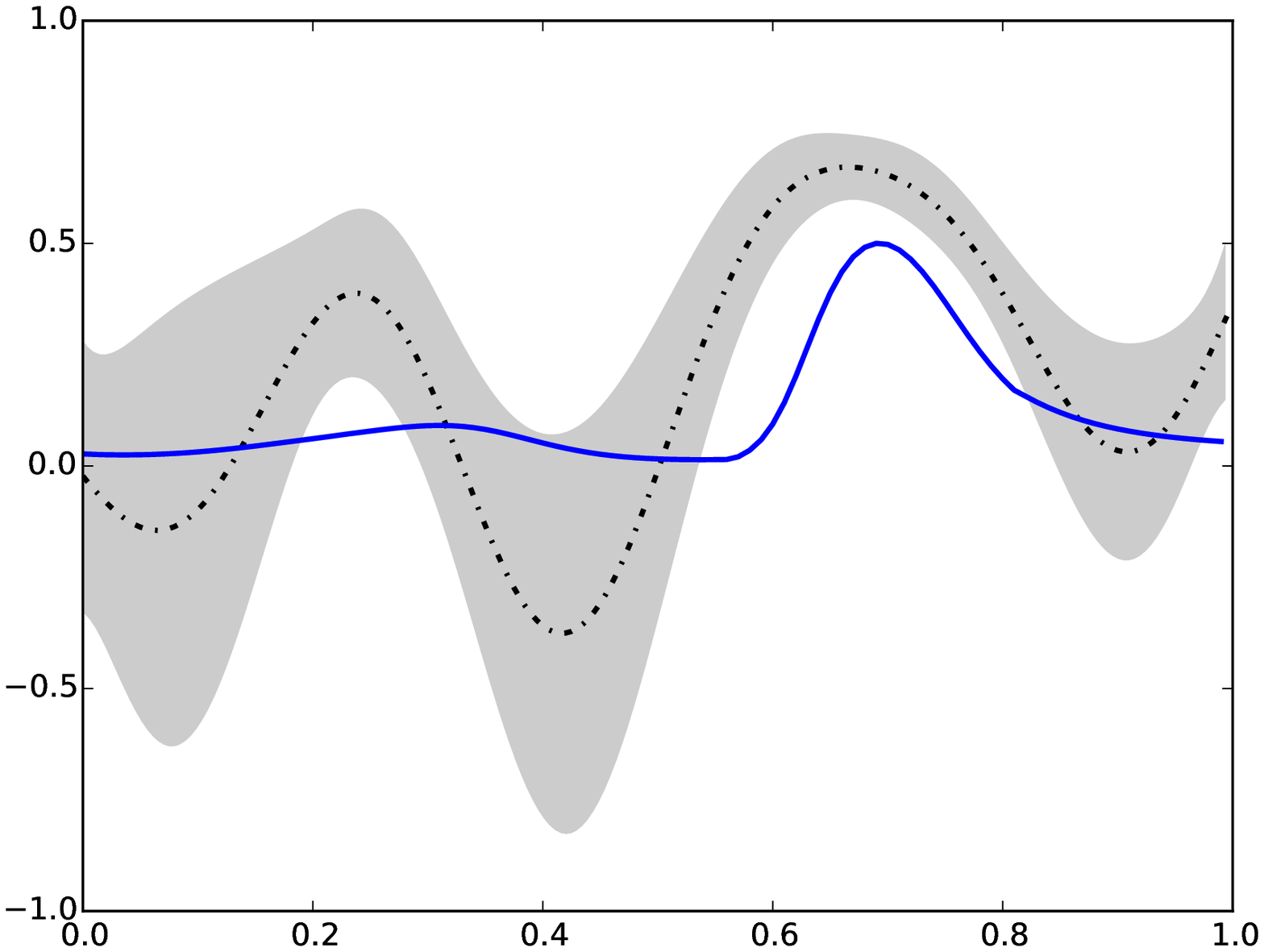}}
\subfloat[$100$ iterations]{\includegraphics[width=0.33\textwidth]{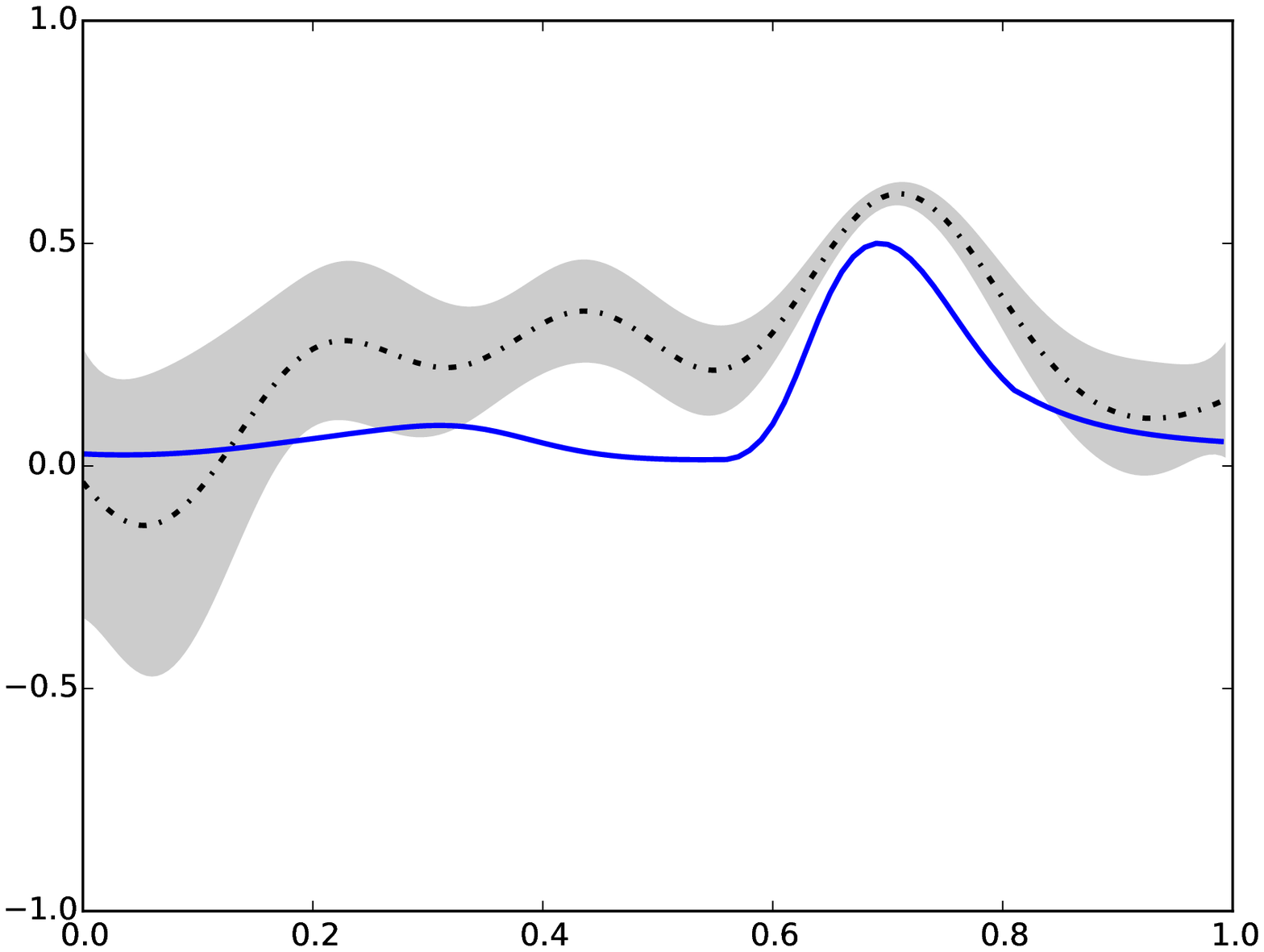}}
\caption{Evolution of a GP preference function in \kersparring; dashed lines correspond to the mean and shaded areas to $\pm 2$ standard deviations. The underlying utility function was sampled randomly from a GP with a squared exponential kernel with lengthscale parameter 0.2, and the resulting preference function is shown in blue. The GP finds the best arm with high confidence.}
\label{fig:ks}
\end{figure*}

Theorem~\ref{thm:conv} implies that \multisparring is asymptotically no-regret. As $t\rightarrow \infty$, the Beta distribution for each arm $i$ is converging to  $P(b_i \succ b_1)$, which implies converging to only choosing the optimal arm.

Most existing dueling bandits algorithm chooses one arm as a ``reference'' arm and the other arm as a competing arm for exploration/exploitation (in the $m=2$ setting). If the distribution over reference arms never changes, then the competing arm is playing against a fixed ``environment'', i.e., it is a standard MAB problem. For general $m$, we can analogously consider choosing only one arm against a fixed distribution over all the other arms.  Using Thompson sampling, the following lemma holds.

\begin{lemma}
	\label{lem:fixed}
Under Approximate Linearity, selecting only one arm  via Thompson sampling against a fixed distribution over the remaining arms leads to optimal regret w.r.t. choosing that arm.
\end{lemma}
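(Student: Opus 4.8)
The plan is to make the informal reduction precise: once the distribution over the competing arms is frozen, the single arm chosen by Thompson sampling faces a stationary environment, so it is playing a genuine stochastic multi-armed bandit, and I can invoke Lemma~\ref{lem:ts} and then translate the resulting MAB regret back into dueling regret via Approximate Linearity \eqref{eqn:al}.

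First I would set up the induced bandit. Let $P$ be the fixed distribution over the remaining arms. If the learner plays $b_i$ against a competitor $b_j \sim P$, the duel outcome is Bernoulli with success probability $\phi(b_i,b_j)+\tfrac12$, so the slot behaves exactly like a stochastic MAB arm with mean reward $\mu(b_i) = \tfrac12 + \mathbb{E}_{j\sim P}[\phi(b_i,b_j)] \in [0,1]$. Beta-Bernoulli Thompson sampling on this single slot is then literally Algorithm~\ref{alg:ts} applied to these $K$ arms.

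Next I would identify the optimal arm and show the gaps are strictly positive. I claim $\phi(b_1,b_j) \ge \phi(b_i,b_j)$ for every $j$, which makes $\mu(b_1)=\max_i \mu(b_i)$. For $b_j \preceq b_i$ this is immediate from \eqref{eqn:al} (which, as noted, implies strong stochastic transitivity), while for $b_1 \succ b_j \succ b_i$ it holds because $\phi(b_1,b_j)\ge 0 > \phi(b_i,b_j)$, and for $b_j=b_1$ because $\phi(b_i,b_1) = -\phi(b_1,b_i) < 0$. Each of these inequalities is strict whenever $b_1 \succ b_i$, so $\Delta^{\text{MAB}}_i := \mu(b_1)-\mu(b_i) > 0$ for every suboptimal $i$. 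Hence $b_1$ is the unique optimum of the induced MAB, Lemma~\ref{lem:ts} applies and gives $\mathbb{E}[R_T^{\text{MAB}}] = \mathcal{O}\!\left(\tfrac{K}{\Delta}\ln T\right)$; equivalently, each suboptimal arm is played only $\mathcal{O}(\ln T)$ times in expectation.

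Finally I would convert back to the dueling regret \eqref{eqn:regret2} restricted to this slot, $\sum_t \phi(b_1, b_t^{(1)})$. The crude route is to note that the per-play dueling regret $\phi(b_1,b_i)\le \tfrac12$ is bounded, so summing over the $\mathcal{O}(\ln T)$ plays of each suboptimal arm already yields $\mathcal{O}(\ln T)$ dueling regret, i.e.\ the optimal dependence on $T$. The tighter route uses \eqref{eqn:al} to obtain a multiplicative comparison $\gamma\,\phi(b_1,b_i) \le \Delta^{\text{MAB}}_i$, so that dueling regret is at most $\tfrac1\gamma$ times the MAB regret. The main obstacle is exactly this last comparison: applying \eqref{eqn:al} to $b_1 \succ b_j \succ b_i$ yields only $\phi(b_1,b_i)-\phi(b_j,b_i) \ge \gamma\phi(b_1,b_j)$, which does not by itself lower-bound $\phi(b_1,b_j)-\phi(b_i,b_j)$ by $\gamma\phi(b_1,b_i)$ when the sampled competitor $b_j$ lies strictly between $b_1$ and $b_i$. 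I would resolve this either by falling back on the bounded per-play regret above (which already suffices for the optimal order in $T$), or by observing that \eqref{eqn:al} is monotone in $\gamma$ and passing to a smaller constant $\gamma'$ for which the intermediate-competitor case also holds, which is possible because with finitely many arms the ratio $(\phi(b_1,b_j)+\phi(b_j,b_i))/\phi(b_1,b_i)$ is bounded away from $0$.
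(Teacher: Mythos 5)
Your proposal follows essentially the same route as the paper's own proof: treat the slot controlled by Thompson sampling as a stochastic Bernoulli MAB induced by the fixed competitor distribution, invoke the Thompson sampling guarantee (Lemma~\ref{lem:ts}), and convert the MAB regret into dueling regret through Approximate Linearity, incurring a $\tfrac{1}{\gamma}$-type blow-up of the MAB bound.

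Two remarks, both in your favor. First, you verify that $b_1$ is the unique optimum of the induced MAB with strictly positive gaps (your three-case analysis of $\phi(b_1,b_j) \ge \phi(b_i,b_j)$); the paper takes this for granted when it writes the induced MAB regret as $\sum_t[\phi(b_1,b_{1t}) - \phi(b_{2t},b_{1t})]$. Second, the ``main obstacle'' you identify is not an artifact of your write-up: it is a gap in the paper's own argument. The paper asserts $\phi(b_1,b_{1t}) - \phi(b_{2t},b_{1t}) \ge \gamma\,\phi(b_1,b_{2t})$ as a direct consequence of \eqref{eqn:al}, but \eqref{eqn:al} is stated only for ordered triplets, here $b_1 \succ b_{2t} \succ b_{1t}$; when the sampled competitor $b_{1t}$ beats the played arm $b_{2t}$, the claimed inequality does not follow from Approximate Linearity as stated (and since the paper explicitly does not assume stochastic triangle inequality, $\phi(b_1,b_{1t})+\phi(b_{1t},b_{2t})$ can be small relative to $\phi(b_1,b_{2t})$, so the same constant $\gamma$ need not work). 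Your two repairs --- bounding the per-play dueling regret by a constant and using the $\mathcal{O}(\ln T)$ expected play counts of suboptimal arms, or passing to a smaller problem-dependent constant $\gamma' > 0$, which is valid for finitely many arms --- both close this hole, so your argument is, if anything, more careful than the paper's at exactly the step where the paper is loose.
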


Lemma~\ref{lem:fixed} and Theorem \ref{thm:conv} motivate the idea of analyzing the regret of each individual arm against near-fixed (i.e.,  converging) environments. 

\begin{theorem}
\label{thm:ms}
Under Approximate Linearity, \multisparring converges to the optimal arm with asymptotically optimal no-regret rate of $\mathcal{O}(K\ln(T)/\Delta)$.
\end{theorem}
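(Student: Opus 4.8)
The plan is to lift the Thompson-sampling MAB guarantee (Lemma~\ref{lem:ts}) through the \selfsparring reduction, translating the MAB regret of each sampled arm into the dueling regret \eqref{eqn:regret2} via Approximate Linearity, while controlling the non-stationarity of the induced bandit environment using the convergence result of Theorem~\ref{thm:conv}. The factor $m$ is constant and will simply be absorbed into the $\mathcal{O}(\cdot)$ at the end.

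First I would set up the reduction explicitly. At round $t$, \multisparring draws $m$ arms i.i.d.\ from the current posterior sampling distribution $p_t$ induced by $D_{t-1}$, and the reward collected for a played arm $b$ is the Bernoulli outcome of its duels against the remaining sampled competitors. Since all $m$ draws share a single posterior, it suffices to analyze one ``slot'' as a stochastic bandit whose arm-$b$ mean reward at round $t$ is $\mu_t(b) = 1/2 + \mathbb{E}_{b'\sim p_t}[\phi(b,b')]$, and then multiply the per-slot regret by the constant $m$. Lemma~\ref{lem:io} guarantees every arm is explored infinitely often, so these empirical means concentrate.

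Second I would relate the induced MAB gap to the dueling regret. For any played arm $b$ the instantaneous dueling loss is $\phi(b_1,b)$, while the MAB gap is $\mu_t(b_1)-\mu_t(b)=\mathbb{E}_{b'\sim p_t}[\phi(b_1,b')-\phi(b,b')]$. A short case analysis on the position of the competitor $b'$ in the total order, invoking Approximate Linearity \eqref{eqn:al} together with antisymmetry $\phi(b,b')=-\phi(b',b)$, should yield a uniform bound $\mu_t(b_1)-\mu_t(b)\ge \gamma\,\phi(b_1,b)$; in particular, once $p_t$ has collapsed onto $b_1$ the gap equals exactly $\phi(b_1,b)$ and the limiting second-best gap is $\Delta=\phi(b_1,b_2)$. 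Consequently the cumulative dueling regret is at most $\gamma^{-1}m$ times the per-slot cumulative MAB regret. Third, I would bound that per-slot MAB regret by splitting the horizon into a transient phase, during which $p_t$ is still far from concentrated on $b_1$, and a stationary phase after $p_t$ has essentially converged; Theorem~\ref{thm:conv} ensures the transient contributes only lower-order regret, while on the stationary phase the environment is near-fixed so Lemma~\ref{lem:fixed} applies and delivers the $\mathcal{O}(K\ln T/\Delta)$ rate with the limiting gap $\Delta$. Summing the two phases and absorbing $\gamma^{-1}$ and $m$ gives the claimed $\mathcal{O}(K\ln(T)/\Delta)$ bound.

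The hard part is this last step: making the near-fixed-environment argument rigorous. One must quantify how fast $p_t$ approaches the point mass at $b_1$ and show this rate is compatible with the self-consistent Thompson-sampling concentration, so that (i) the transient regret is genuinely lower order than the main logarithmic term, and (ii) the gap in the final bound can be taken to be the limiting $\Delta=\phi(b_1,b_2)$ rather than a shrinking, time-dependent gap $\Delta_t$ that would inflate the rate. This coupling between the convergence of the environment and the bandit regret it drives is the delicate accounting that the remainder of the proof must handle; the reduction and the Approximate-Linearity translation in the first two steps are comparatively routine.
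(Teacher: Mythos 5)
Your overall strategy is the same as the paper's: use Theorem~\ref{thm:conv} to argue that the self-induced environment converges to a point mass at $b_1$, treat the pre-convergence phase as contributing only bounded (problem-dependent, constant-in-$T$) regret, and then invoke the stationary Thompson Sampling guarantee (Lemma~\ref{lem:ts} via Lemma~\ref{lem:fixed}, which the paper supplements with Corollary~\ref{cor:conv} for drifting-but-converging environments) to obtain the $\mathcal{O}(K\ln T/\Delta)$ rate after convergence. Both you and the paper leave the same step informal --- the coupling between the non-stationary, algorithm-dependent environment and the stationary-bandit guarantee; you candidly flag it as unresolved, while the paper asserts that once the environment places mass at least $1-\delta$ on $b_1$ with $\delta < \Delta/(\Delta+\omega)$, it ``can be considered as unbiased'' and the stochastic Thompson Sampling bound applies up to constants.

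However, one concrete intermediate claim in your second step is false as stated: the uniform gap translation $\mu_t(b_1)-\mu_t(b) \ge \gamma\,\phi(b_1,b)$ for an \emph{arbitrary} sampling distribution $p_t$ does not follow from Approximate Linearity \eqref{eqn:al}. The case $b_1 \succ b' \succ b$ breaks: there, \eqref{eqn:al} gives $\phi(b_1,b)-\phi(b',b) \ge \gamma\,\phi(b_1,b')$, which bounds the wrong quantity (it lower-bounds the end-to-end gap, not the sum $\phi(b_1,b')+\phi(b',b)$ that you need). Concretely, take three arms with $\phi(b_1,b_2)=\phi(b_2,b_3)=\epsilon$ and $\phi(b_1,b_3)=0.4$: the total order and \eqref{eqn:al} hold (the only ordered triplet requires $0.4-\epsilon\ge\gamma\epsilon$), yet with $p_t$ a point mass on $b_2$ and $b=b_3$ the induced MAB gap is $2\epsilon$, which is below $\gamma\cdot 0.4$ once $\epsilon<0.2\gamma$. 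The inequality you want would require something like the stochastic triangle inequality, which the paper explicitly states is \emph{complementary to}, not implied by, Approximate Linearity. Fortunately this error is inessential to the asymptotic claim: during the transient phase the regret is bounded by a constant regardless of any gap translation, and in the near-converged phase, where $p_t$ places mass $1-\delta$ on $b_1$, one has $\mu_t(b_1)-\mu_t(b) \ge (1-\delta)\phi(b_1,b)$ directly, since every term $\phi(b_1,b')-\phi(b,b')$ is nonnegative and the $b'=b_1$ term contributes $(1-\delta)\phi(b_1,b)$. If you restrict the translation to the converged regime, your argument coincides with the paper's proof.
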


Theorem \ref{thm:ms} shows an no-regret guarantee for \multisparring that asymptotically matches the optimal rate of $\mathcal{O}(K\ln(T)/\Delta)$ up to constant factors. In other words, once $t>C$ for some problem-dependent constant $C$, the regret of \multisparring matches information-theoretic bounds up to constant factors (see \citet{yue2012k} for lower bound analysis).\footnote{A finite-time guarantee requires more a refined analysis of $C$, and is an interesting direction for future work.}
The proof technique follows two major steps:
(1) prove the convergence of \multisparring as shown in Theorem \ref{thm:conv}; and
(2) bound the expected total regret for sufficiently large $T$.

 \begin{algorithm}[tb]
     \caption{\kersparring}
     \label{alg:ks}
 \begin{algorithmic}[1]
 	 \INPUT Input space $S$, GP prior $(\mu_0, \sigma_0)$, $m$ the number of arms drawn at each iteration   
     \FOR{$t=1,2,\ldots $}
     	\FOR{$j = 1, \ldots, m$}
          \STATE Sample $f_j$ from $(\mu_{t-1}, \sigma_{t-1})$
          \STATE Select $i_j(t) := \argmax_x{f_j(x)}$
    	\ENDFOR
        \STATE Play $m$ arms $\{i_j(t)\}_j$, observe pairwise feedback matrix $R = \{r_{jk} \in \{0,1,\emptyset\}\}_{m \times m}$
        \FOR{$j, k = 1, \ldots, m$}
          \IF{$r_{jk}\neq\emptyset$}
          \STATE apply Bayesian update using $(i_j(t), r_{jk})$ to obtain $(\mu_t, \sigma_t)$
          \ENDIF
        \ENDFOR
     \ENDFOR
 \end{algorithmic}
 \end{algorithm}

\subsection{Dependent Arms Case}
We use Gaussian processes (see Section \ref{sec:gp}) to model dependencies among arms. 
Applying Gaussian processes is not straightforward, since the underlying utility function is not directly observable or does not exist. 
We instead use Gaussian processes to model a specific the preference function. In Gaussian process notation, the preference function $f(b)$ represents the preference of choosing $b$ over the perfect ``environment'' of competing arms. Like in the independent arms case (Section \ref{sec:multisparring}), the perfect environment corresponds to having all the remaining arms be deterministically selected as the best arm $b_1$, yielding $f(b) = P(b \succ b_1)$. We model $f(b)$ as a sample from a Gaussian process $GP(\mu(b),k(b, b'))$. Note that this setup is analogous to the independent arms case, which uses a Beta prior to estimate the probability of each arm defeating the environment (and converges to competing against the best environment). 


Algorithm \ref{alg:ks} describes \kersparring, which instantiates \selfsparring using a Gaussian process Thompson sampling algorithm.
The input space $S$ can be continuous. 
At each iteration $t$, $m$ arms are sampled using the Gaussian process prior $D_{t-1}$. The posterior $D_t$ is then updated by the responses $R$ and the prior. 

Figure~\ref{fig:ks} illustrates the optimization process in a one-dimensional example. The underlying preference function against the best environment is shown in blue. Dashed lines are the mean function of GP. Shaded areas are $\pm 2$ standard deviations regions (high confidence regions). Figures \ref{fig:ks}(a)(b)(c) represent running \kersparring algorithm at 5, 20, and 100 iterations. The GP model can be observed to be converging to the preference function against the best environment.

We conjecture that it is possible to prove no-regret guarantees that scale w.r.t. the dimensionality of the kernel. However, there does not yet exist suitable regret analyses for Gaussian Process Thompson Sampling in the kernelized MAB setting to leverage.

\begin{table*}[t]
\centering
\begin{small}
\begin{tabular}{|l|l|}
\hline
Name  & Distribution of Utilities of arms \\ \hline
\hline
1good & 1 arm with utility 0.8, 15 arms with utility 0.2  \\ \hline
arith & 1 arm with utility 0.8, 15 arms forming an arithmetic sequence between 0.7 and 0.2 \\ \hline
\end{tabular}
\end{small}
\caption{16-arm synthetic datasets used for experiments.}
\label{tab:arms}
\vspace{-0.1in}
\end{table*}

\section{Experiments}\label{sec:experiments}

\subsection{Simulation Settings \& Datasets}
\label{sec:data}

\textbf{Synthetic Functions.} We evaluated on a range of 16-arm synthetic settings derived from  the utility-based dueling bandits setting of \citet{ailon2014reducing}. For the multi-dueling setting, we used the following preference functions:
\vspace{-0.1in}
\begin{table}[H]
\centering
\begin{tabular}{lc}
linear:  & $\phi(x, y) - 1/2 = (1+x-y)/2$          \\
logit:   & $\phi(x, y) - 1/2 = (1+\exp{(y-x)})^{-1}$
\end{tabular}
\end{table}
\vspace{-0.2in}
and the utility functions shown in Table \ref{tab:arms} (generalized from those in \citet{ailon2014reducing}). 
Note that although these preference functions do not satisfy approximate linearity over their entire domains, they do for the utility samples (over the a finite subset of arms).

\textbf{MSLR Dataset.}
Following the evaluation setup of \citet{brost2016multi}, we also  used the Microsoft Learning to Rank (MSLR) WEB30k dataset, which consists of over 3 million query-document pairs labeled with relevance scores \citep{liu2007letor}. Each pair is  scored along 136 features, which can be treated as rankers (arms). For any subset of arms, we can estimate a preference matrix using the expected probability over the entire dataset of one arm beating another using top-10 interleaving and a perfect-click model. 
We simulate user feedback by using team-draft multileaving \citep{schuth2014multileaved}.

\subsection{Vanilla Dueling Bandits Experiments}
We first compare against the vanilla dueling bandits setting of dueling a single pair of arms at a time.  These experiments are included as a sanity check to confirm that \selfsparring (with $m=2$) is a competitive algorithm in the original dueling bandits setting, and are not the main focus of our empirical analysis.

We empirically evaluate against a range of conventional dueling bandit algorithms, including:
\begin{itemize}
\vspace{-0.1in}
\item \textbf{Interleaved Filter (IF)} \citep{yue2012k}
\vspace{-.1in}
\item \textbf{Beat the Mean (BTM)} \citep{yue2011beat}
\vspace{-.1in}
\item \textbf{RUCB} \citep{zoghi2014relative}
\vspace{-.1in}
\item \textbf{MergeRUCB} \citep{zoghi2015mergerucb}
\vspace{-.1in}
\item \textbf{Sparring + UCB1} \citep{ailon2014reducing}
\vspace{-.1in}
\item \textbf{Sparring + EXP3} \citep{dudik2015contextual}
\vspace{-.1in}
\item \textbf{RMED1} \citep{komiyama2015regret}
\vspace{-.1in}
\item \textbf{Double Thompson Sampling} \citep{wu2016doublets}
\end{itemize}
For Double Thompson Sampling and \multisparring, we set the learning rates to be 2.5 and 3.5 as optimized over a separate dataset of uniformly sampled utility functions. We use $\alpha=0.51$ for RUCB/MergeRUCB, $\gamma=1$ for BTM, and $f(K) = 0.3K^{1.01}$ for RMED1.

\textbf{Results.}
For each scenario, we run each algorithm 100 times for 20000 iterations. For brevity, we show in Figure \ref{fig:best9} the average regret of one synthetic simulation along with shaded one standard-deviation areas.  We observe that \selfsparring is competitive with the best performing methods in the original dueling bandits setting.  More complete experiments that replicate \citet{ailon2014reducing} are provided in the supplementary material, and demonstrate the consistency of this result.

Double Thompson Sampling (DTS) is the best performing approach in Figure \ref{fig:best9}, which is a fairly consistent result in the extended results in the supplementary material.
However, given their high variances they are essentially comparable w.r.t. all other algorithms. 
Furthermore, \multisparring has the advantage of being easily extensible to the more realistic multi-dueling and kernelized settings, which is not true of DTS.



\begin{figure}[t]
\centering
\includegraphics[width=0.5\textwidth]{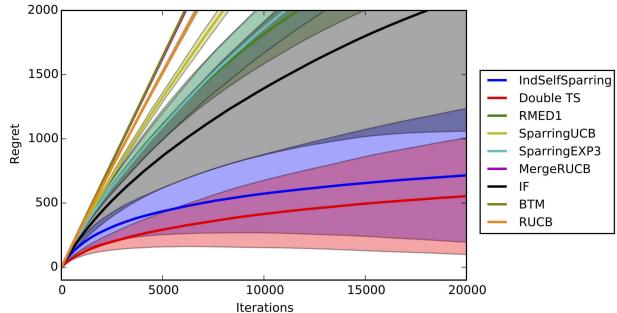}
\vspace{-0.1in}
\caption{Vanilla dueling bandits setting.  Average regret for top nine algorithms on logit/arith. Shaded regions correspond to one standard deviation.}
\label{fig:best9}
\end{figure}

\subsection{Multi-Dueling Bandits Experiments}

We next evaluate the multi-dueling setting with independent arms.  We compare against the main existing approaches that are applicable to the multi-dueling setting, including the MDB algorithm \citep{brost2016multi}, and the multi-dueling extension of Sparring, which we refer to as MultiSparring \citep{ailon2014reducing}.  
Following \cite{brost2016multi}, we use $\alpha=0.5$ and $\beta=1.5$ for the MDB algorithm. For \multisparring, we set learning rate to be the default 1.  Note that the vast majority dueling bandits algorithms are not easily applicable to the multi-dueling setting.  For instance, RUCB-style algorithms treat the two arms asymmetrically, which is not easily generalized to multi-dueling.

\textbf{Results on Synthetic Experiments.}
We test $m=4$ on the linear 1good and arith datasets in Figure \ref{fig:multi1good} and Figure \ref{fig:multiarith}, respectively. We observe that \multisparring significantly outperforms competing approaches.

\textbf{Results on MSLR Dataset.} Following the simulation setting of \cite{brost2016multi} on the MSLR dataset (see Section \ref{sec:data}), we compared against the MDB algorithm over the same collection of 50 randomly sampled 16-arm subsets. We ensured that each 16-arm subset had a Condorcet winner; in general it is likely for any random subset of arms in the MSLR dataset to have a Condorcet winner \citep{zoghi2015copeland}.  
Figure \ref{fig:multimslr} shows the results, where we again see that \multisparring enjoys significantly better performance.  

\begin{figure}[t]
\centering
\includegraphics[width=0.4\textwidth]{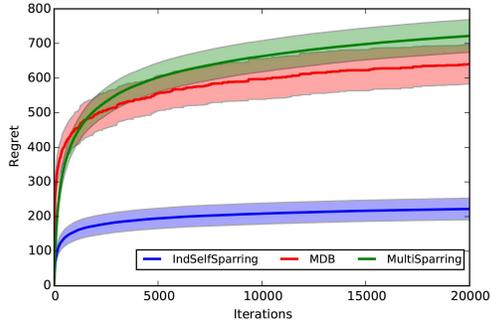}
\vspace{-0.1in}
\caption{Multi-dueling regret for linear/1good setting}
\label{fig:multi1good}
\vspace{-0.1in}
\end{figure}

\begin{figure}[t]
\centering
\includegraphics[width=0.4\textwidth]{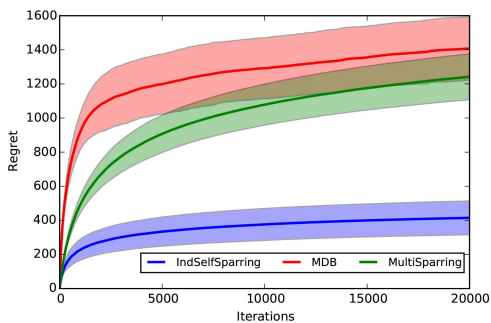}
\vspace{-0.1in}
\caption{Multi-dueling regret for linear/arith setting}
\label{fig:multiarith}
\vspace{-0.05in}
\end{figure}


\begin{figure}[t]
\centering
\includegraphics[width=0.42\textwidth]{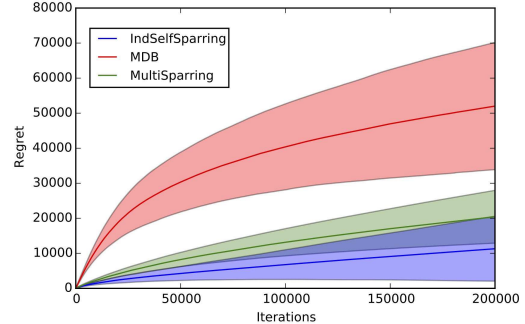}
\vspace{-0.1in}
\caption{Multi-dueling regret for MSLR-30K experiments}
\vspace{-0.14in}
\label{fig:multimslr}
\end{figure}

\begin{figure}[b]
\centering
\includegraphics[width=0.38\textwidth]{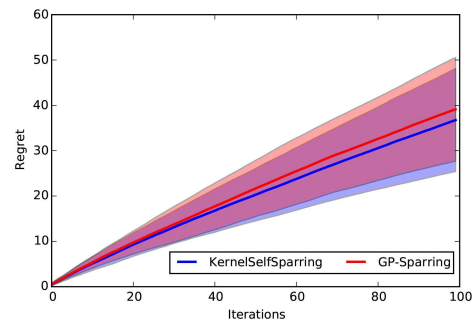}
\vspace{-0.15in}
\caption{2-dueling regret for kernelized setting with synthetic preferences}
\vspace{-0.05in}
\label{fig:gpsynthetic}
\end{figure}

\begin{figure}[t]
\centering
\includegraphics[width=0.38\textwidth]{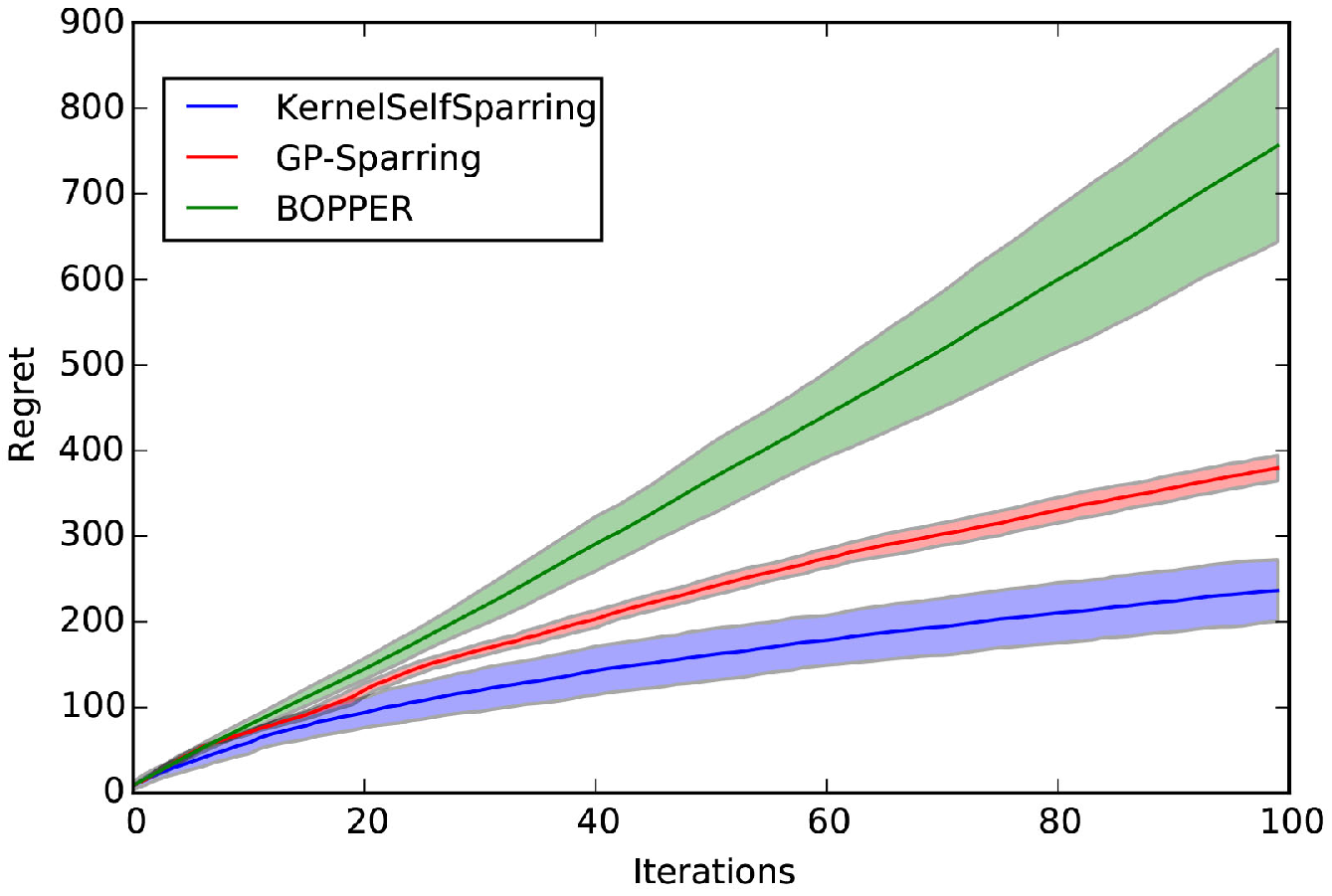}
\vspace{-0.1in}
\caption{2-dueling regret for kernelized setting with Forrester objective function}
\vspace{-0.05in}
\label{fig:gpforrester}
\end{figure}

\begin{figure}[t]
\centering
\includegraphics[width=0.38\textwidth]{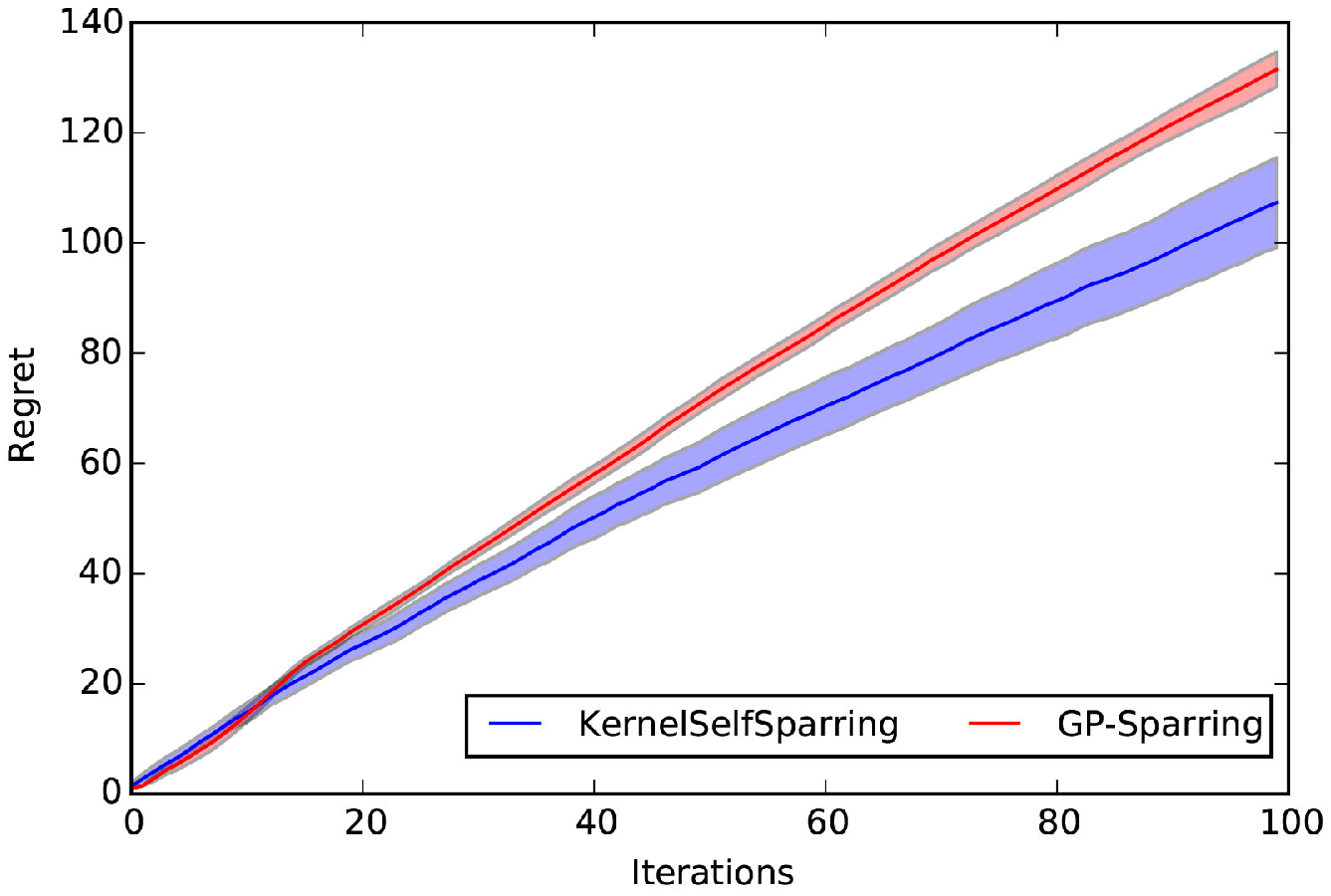}
\vspace{-0.1in}
\caption{2-dueling regret for kernelized setting with Six-Hump Camel objective function}
\vspace{-0.1in}
\label{fig:gpsixhump}
\end{figure}

\begin{figure}[t]
\centering
\includegraphics[width=0.39\textwidth]{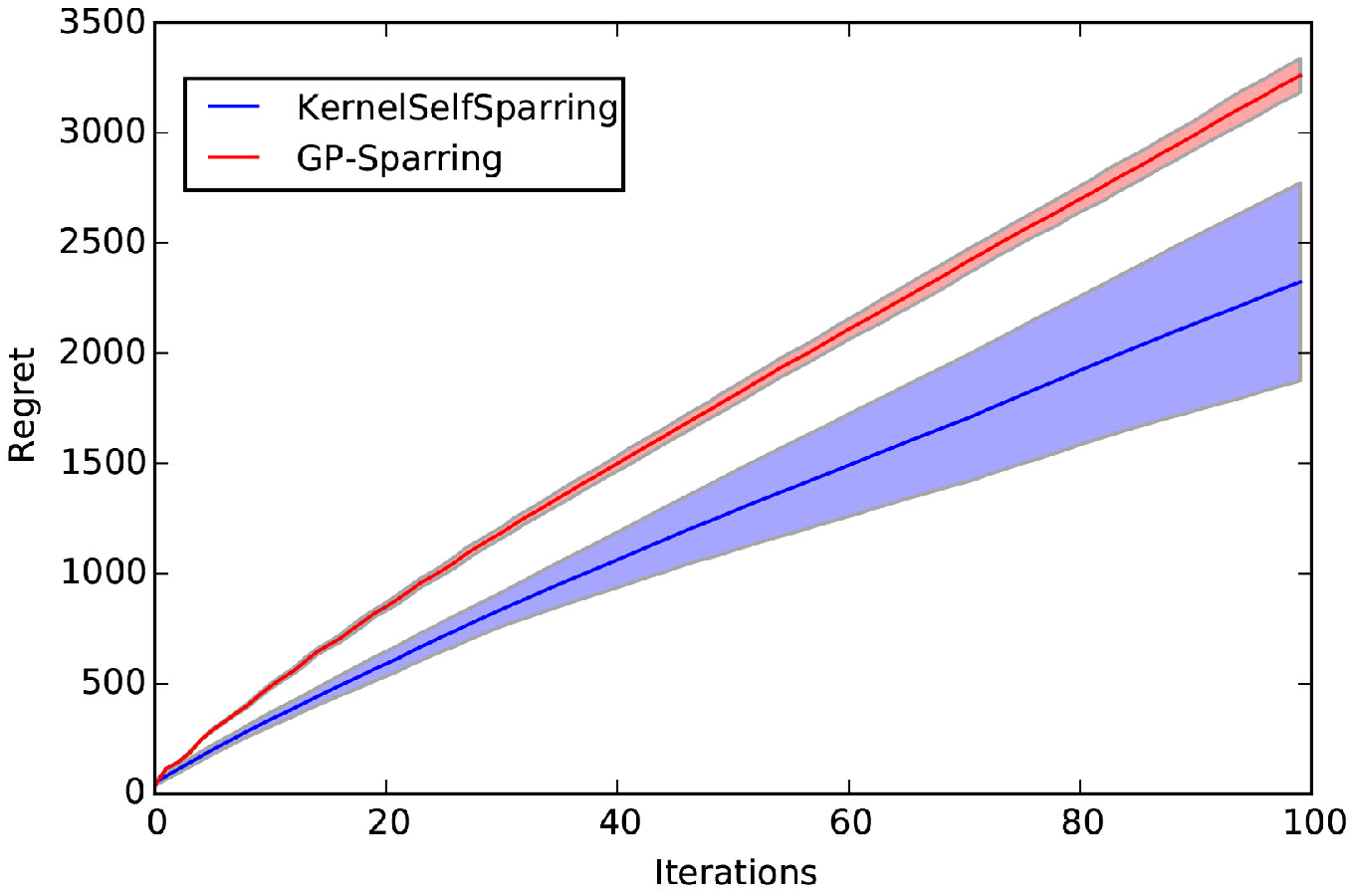}
\vspace{-0.1in}
\caption{Multi-dueling regret for kernelized setting with Forrester objective function}
\label{fig:multiforrester}
\vspace{-0.05in}
\end{figure}

\begin{figure}[t]
\centering
\includegraphics[width=0.38\textwidth]{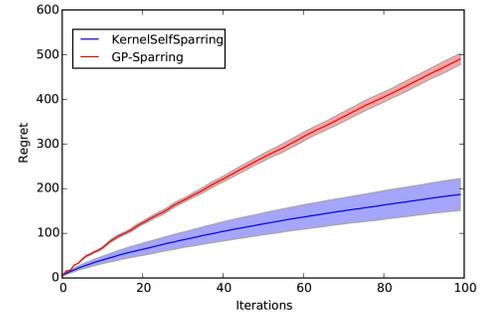}
\vspace{-0.1in}
\caption{Multi-dueling regret for kernelized setting with Six-Hump Camel objective function}
\label{fig:multisixhump}
\vspace{-0.05in}
\end{figure}




\subsection{Kernelized (Multi-)Dueling Experiments}

We finally evaluate the kernelized setting for both the 2-dueling and the multi-dueling case.
We evaluate \kersparring against BOPPER \citep{gonzalez2016bayesian} and Sparring \citep{ailon2014reducing} with GP-UCB \citep{srinivas10}. BOPPER is a Bayesian optimization method can be applied to kernelized 2-dueling setting (but not multi-dueling).  Sparring with GP-UCB, which refer to as GP-Sparring, is essentially a variant of our \kersparring approach but maintains a $m$ GP-UCB bandit algorithms (one controlling each choice of arm to be dueled), rather than just a single one.

\kersparring and GP-Sparring use GPs that model the preference function, i.e. are one-sided, whereas BOPPER uses a GP to model the entire preference matrix. Following \citet{srinivas10}, we use a squared exponential kernel with lengthscale parameter 0.2 for both GP-Sparring and \kersparring, and use a squared exponential kernel with parameter 1 for BOPPER. We initialize all GPs with a zero-mean prior, and use sampling noise variance $\sigma^2=0.025$. For GP-Sparring, we use the scaled-down version of $\beta_t$ as suggested by \citet{srinivas10}.
 
We use the Forrester and Six-Hump Camel functions as utility functions on $[0,1]$ and $[0,1]^2$, respectively, as in \citet{gonzalez2016bayesian}. Similarly, we use the same uniform discretizations of 30 and 64 points for the Forrester and Six-Hump Camel settings respectively, and use the logit link function to generate preferences.

Since the BOPPER algorithm is computationally expensive, we only include it in the Forrester setting, and run each algorithm 20 times for 100 iterations. In the Six-Hump Camel setting, we run \kersparring and GP-Sparring for 500 iterations 100 times each. Results are presented in Figures \ref{fig:gpforrester} and \ref{fig:gpsixhump}, where we observe much better performance from \kersparring against both BOPPER and GP-Sparring.


In the kernelized multi-dueling setting, we compare against GP-Sparring.  We run each algorithm for 100 iterations 50 times on the Forrester and Six-Hump Camel functions, and plot their regrets in Figures \ref{fig:multiforrester} and \ref{fig:multisixhump} respectively. We use $m=4$ for both algorithms, and the same discretization as in the standard dueling case.  We again observe significant performance gains of our \kersparring approach.



\section{Conclusions}



We studied multi-dueling bandits with dependent arms.  This setting extends the original dueling bandits setting by dueling multiple arms per iteration rather than just two, and modeling low-dimensional dependencies between arms rather than treat each arm independently.  Both extensions are motivated by practical real-world considerations such as in personalized clinical treatment \citep{sui2014clinical}.
We proposed \selfsparring, which is simple and easy to extend, e.g., by integrating with kernels to model dependencies across arms.  
Our experimental results demonstrated significant reduction in regret compared to state-of-the-art dueling bandit algorithms. Generally, relative benefits compared to dueling bandits increased with the number of arms being compared. For \selfsparring, the incurred regret did not increase substantially as the number of arms increased.

Our approach can be extended in several important directions.  Most notably, the theoretical analysis could be improved.  For instance, it would be more desirable to provide explicit finite-time regret guarantees rather than asymptotic ones.  Furthermore, an analysis of the kernelized multi-dueling setting is also lacking.  From a more practical perspective, we assumed that the choice of arms does not impact the feedback mechanism (e.g., all pairs), which is not true in practice (e.g., humans can have a hard time distinguishing very different arms).



\newpage
\begin{small}
\bibliography{selfspar}
\bibliographystyle{uai}
\end{small}

\newpage
\appendix

\section{Proofs}
\label{app:pf}
This section provides the proof sketch of Lemmas and Theorems mentioned in the main paper.

\textbf{Lemma~\ref{lem:ts}.}
For the K-armed stochastic MAB problem, Thompson Sampling has expected regret:
$\mathbb{E}[R_T^{\text{MAB}}] = \mathcal{O}\left(\frac{K}{\Delta}\ln T \right)$, where $\Delta$ is the difference between expected rewards of the best two arms.
\begin{proof}
This lemma is a direct result from Theorem 2 of \citet{agrawal2012} and Theorem 1 of \citet{kaufmann2012}.
\end{proof}

\textbf{Lemma~\ref{lem:io}.}
Running \multisparring with infinite time horizon will sample each arm infinitely often.
\begin{proof}
Proof by contradiction. \\
Let $B(x;\alpha,\beta) = \int_0^x t^{\alpha-1}(1-t)^{\beta-1} dt$. 
Then the CDF of Beta distribution with parameters $(\alpha,\beta)$ is $$F(x;\alpha,\beta) = \frac{B(x;\alpha,\beta)}{B(1;\alpha,\beta)}.$$ 
Suppose arm $b$ can only be sampled in finite number of iterations. Then there exists finite upper bound $T_b$ for $\alpha_b + \beta_b$. For any given $x \in (0,1)$, the probability of sampling values of arm $b$ $\theta_b$ greater than $x$ is $$P(\theta_b > x) = 1-F(x;\alpha_b,\beta_b) $$ 
$$\geq 1-F(x;1,T_b-1) = (1 - x)^{T_b-1} >0$$
Then by running \multisparring, the probability of choosing arm $b$ after it has been chosen $T_b$ times: 

$$P(\theta_b \geq max_i\{\theta_{b_i}\}) \geq \prod_i P(\theta_b \geq \theta_{b_i})$$
is strictly non-zero. That violates any fixed upper bound $T_b$.
\end{proof}

\textbf{Theorem~\ref{thm:conv}.}
Under Approximate Linearity, \multisparring converges to the optimal arm $b_1$ as running time $t\rightarrow \infty$: $\lim_{t\rightarrow \infty} \mathbb{P}(b_t = b_1) = 1$.
\begin{proof}
\multisparring keeps one Beta distribution $ Beta(\alpha_i(t),\beta_i(t))$ for each arm $b_i$ at time step $t$. Let $\hat{\mu}_i(t)= \frac{\alpha_i(t)}{\alpha_i(t) + \beta_i(t)}$, $\hat{\sigma}^2_i(t) = \frac{\alpha_i(t)\beta_i(t)}{(\alpha_i(t) + \beta_i(t))^2(\alpha_i(t) + \beta_i(t) + 1)}$ be the empirical mean and variance for arm $b_i$. \\
Obviously,  $\hat{\sigma}^2_i(t) \rightarrow 0$ as $(\alpha_i(t) + \beta_i(t)) = (S_i(t)+F_i(t)) \rightarrow \infty$. By Lemma~\ref{lem:io} we have $(S_i(t)+F_i(t)) \rightarrow \infty$ as $t\rightarrow \infty$. That shows every Beta distribution is concentrating to a Dirac function at $\hat{\mu}_i(t)$ when $t\rightarrow \infty$. Define $\hat{\mu}(t) = [\hat{\mu}_1(t), \cdots, \hat{\mu}_K(t)]^T \in [0,1]^K$to be the vector of means of all arms. Then $\mu = \{\mu_i = P(b_i \succ b_1)\}_{i = 1, \cdots, K}$ is a stable point for \multisparring in the $K$ dimensional mean space.

Suppose there exists another stable point $\nu \in [0,1]^K$($\nu \neq \mu$) for \multisparring, consider the following two possibilities: (1) $\nu_1 = max_i\{{\nu_i}\}$ and (2) $\nu_1 < max_i\{{\nu_i}\} = \nu_j$.

Since the Beta distributions for each arm $b_i$ is concentrating to Dirac functions at $\nu_i$, $P(\theta_i > \theta_j) \in [\mathbb{I}(\nu_i > \nu_j) -\delta, \mathbb{I}(\nu_i > \nu_j) + \delta]$ for any fixed $\delta > 0$ with high probability.

If (1) holds, then $\nu_1$ will converge to  $\frac{1}{2} = \mu_1$ and $\nu_i$ will converge to $P(b_i \succ b_1) = \mu_i$. Thus $\nu = \mu$. Contradict to $\nu \neq \mu$.

If (2) holds, then $\nu_j$ will converge to  $\frac{1}{2} = \mu_1$ and $\nu_1 \in [P(b_1 \succ b_j) - \delta, P(b_1 \succ b_j) + \delta]$ for any fixed $\delta>0$ with high probability. Since $P(b_1 \succ b_j) \geq \frac{1}{2}+ \Delta$, $\nu_1 \in [P(b_1 \succ b_j) - \delta, P(b_1 \succ b_j) + \delta] \geq \frac{1}{2} + \Delta - \delta$ . Since $\delta$ can be arbitrarily small, we have $\nu_1 \geq \frac{1}{2} + \Delta-\delta > \frac{1}{2}+\delta > \nu_j$. That contradict to $\nu_1 < \nu_j$.

In summary, $\mu = \{ \mu_i =P(b_i \succ b_1)\}_{i = 1, \cdots, K}$ is the only stable point in the mean space. As $\hat{\mu}(t) \rightarrow \mu$, $\mathbb{P}(b_t = b_1) \rightarrow 1$.

Define $\mathbb{P}_t = [P_1(t), P_2(t), ..., P_K(t)]$ as the probabilities of picking each arm at time $t$. Let $\mathbb{P} = \{\mathbb{P}_t\}_{t = 1, 2, ...}$ be the sequence of probabilities w.r.t. time. 
Assume \multisparring is non-convergent. It is equivalent to say that $\mathbb{P}$ is not converging to a fixed distribution. Then $\exists \delta > 0$ and arm $i$ s.t. the sequence of probabilities $\{P_i(t)\}_t$ satisfies:
$$\limsup_{t \rightarrow \infty} P_i(t) - \liminf_{t \rightarrow \infty} P_i(t) > \delta$$
w.h.p. which is equivalent of having:
$$\limsup_{t \rightarrow \infty} \hat{\mu}_i(t) - \liminf_{t \rightarrow \infty} \hat{\mu}_i(t) > \epsilon$$
w.h.p. for some fixed $\epsilon > 0$. This violates the stability of \multisparring in the $K$ dimensional mean space as shown above. So as $t\rightarrow \infty$,   $\hat{\mu}(t) \rightarrow \mu$, $\mathbb{P}(b_t = b_1) \rightarrow 1$.

\end{proof}

\textbf{Lemma~\ref{lem:fixed}.}
Under Approximate Linearity, selecting only one arm  via Thompson sampling against a fixed distribution over the remaining arms leads to optimal regret w.r.t. choosing that arm.
\begin{proof}
We first prove the results for $m = 2$. Results for any $m > 2$ can be proved in a similar way.

Consider Player 1 drawing arms from a fixed distribution $L$. Player 2's drawing strategy is an MAB algorithm $\mathcal{A}$.

Let $R_A(T)$ be the regret of algorithm $\mathcal{A}$ within horizon $T$. $B(T) = \sup \mathbb{E}[R_A(T)]$ is the supremum of the expected regret of $\mathcal{A}$. 

The reward of Player 2 at iteration $t$ is $\phi (b_{2t}, b_{1t})$. Reward of keep playing the optimal arm is $\phi (b_1, b_{1t})$. So the total regret after $T$ rounds is 
$$R_A(T) = \sum_{t=1}^{T} [\phi (b_1, b_{1t}) - \phi (b_{2t}, b_{1t})]$$
Since Approximate Linearity yields
$$\phi (b_1, b_{1t}) - \phi (b_{2t}, b_{1t}) \geq \gamma \cdot \phi(b_1, b_{2t})$$
We have
$$\mathbb{E}[R_A(T)] = \mathbb{E}\mathbb{E}_{b_{1t}\sim L}\left[\sum_{t=1}^{T} [\phi (b_1, b_{1t}) - \phi (b_{2t}, b_{1t})\right]$$
$$\geq \mathbb{E}\mathbb{E}_{b_{1t}\sim L} \left[\sum_{t=1}^{T} \gamma \cdot \phi(b_1, b_{2t})\right] $$ 
$$= \gamma \cdot \mathbb{E} \left[ \sum_{t=1}^{T}\phi(b_1, b_{2t})\right]
= \gamma \cdot \mathbb{E}[R(T)]$$
So the total regret of Player 2 is bounded by
$$\mathbb{E}[R(T)] \leq \frac{1}{\gamma} \mathbb{E}[R_A(T)] \leq \frac{1}{\gamma} \sup \mathbb{E}[R_A(T)] = \frac{1}{\gamma} B(T)$$
\end{proof}

\begin{cor}
\label{cor:conv}
If approximate linearity holds, competing with a drifting but converging distribution of arms guarantees the one-side convergence for Thompson Sampling.
\end{cor}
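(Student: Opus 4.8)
The plan is to bootstrap from Lemma~\ref{lem:fixed}, which handles a genuinely fixed environment, to the realistic situation in which the competing distribution drifts yet converges. Denote by $L_t$ the distribution over the remaining arms at iteration $t$, and let the effective reward of a tracked arm $b$ against that environment be $\rho_t(b) = \mathbb{E}_{b' \sim L_t}[P(b \succ b')]$. By Theorem~\ref{thm:conv} the environment converges, $L_t \to L_\infty$ (in the \multisparring application $L_\infty$ is the Dirac mass at $b_1$, so $\rho_\infty(b) = P(b \succ b_1)$), hence for each arm the instantaneous expected reward converges, $\rho_t(b) \to \rho_\infty(b)$. The first step is to make this precise: given $\epsilon > 0$ choose $T_\epsilon$ so that $|\rho_t(b) - \rho_\infty(b)| < \epsilon$ for all $t > T_\epsilon$ and all $b$, which is available because there are finitely many arms and $P(\cdot \succ \cdot)$ is bounded.

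The second step is to show that the Beta posterior mean $\hat{\mu}_b(t) = S_b(t)/(S_b(t) + F_b(t))$ of each tracked arm converges to $\rho_\infty(b)$. The wins recorded for arm $b$ are Bernoulli outcomes whose success probabilities are the values $\rho_s(b)$ at the (infinitely many, by Lemma~\ref{lem:io}) iterations $s$ at which $b$ is played. Since these per-step means converge to $\rho_\infty(b)$ and the number of plays tends to infinity, a Cesàro/Toeplitz argument for the means together with a martingale strong law for the centered fluctuations yields $\hat{\mu}_b(t) \to \rho_\infty(b)$ almost surely; crucially, the finite prefix of observations collected before $T_\epsilon$ contributes a vanishing fraction of the empirical average and is therefore washed out. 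The learning rate $\eta$ rescales $S_b$ and $F_b$ equally and so does not affect this ratio.

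The third step reuses the stability and uniqueness argument already established in Theorem~\ref{thm:conv}: once every posterior mean sits within $\epsilon$ of its limit $\rho_\infty(\cdot)$ and the Beta distributions have concentrated to near-Dirac form, Thompson sampling selects the maximizer of $\rho_\infty(\cdot)$ with probability approaching one, which is exactly the claimed one-sided convergence of the tracked arm. Invoking approximate linearity as in Lemma~\ref{lem:fixed} then converts this convergence into the regret guarantee for that arm, with the factor $1/\gamma$ relating the tracked arm's own regret to the regret of its underlying MAB subroutine bounded by Lemma~\ref{lem:ts}.

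The main obstacle is the non-stationarity introduced by the drift: the standard Thompson-sampling regret analysis assumes stationary reward laws, whereas here the reward distribution of each arm changes every round. The crux is therefore to argue that convergence of the environment together with infinite sampling suffices to neutralize the transient bias, i.e.\ that early observations taken under a misleading environment do not permanently corrupt the posterior. Controlling the rate at which this bias is washed out precisely enough to preserve a finite-time $\mathcal{O}(\ln T / \Delta)$ rate, rather than merely asymptotic convergence, is the delicate part, and is likely why the surrounding guarantees in this paper are stated asymptotically.
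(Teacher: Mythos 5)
Your proposal is correct in substance but takes a genuinely different route from the paper's own proof. The paper never establishes convergence of the posteriors at all: it picks a time $T$ after which the environment's drift band, delimited by $b^{+}=\sup_{t>T}b_t$ and $b^{-}=\inf_{t>T}b_t$, is narrower than the top gap, $\phi(b^{+},b^{-})<\phi(b_1,b_2)$, and then applies approximate linearity once, in the form $\phi(b_1,b^{+})-\phi(b_2,b^{-})\geq\gamma\left[\phi(b_1,b_2)-\phi(b^{+},b^{-})\right]>0$, to conclude that the optimal arm's reward against the \emph{worst} state of the drifting environment dominates every other arm's reward against its \emph{best} state; one-sided convergence then follows by reduction to the fixed-environment case of Lemma~\ref{lem:fixed}. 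You instead prove that each arm's posterior mean converges to the limiting reward $\rho_\infty(b)$ via a martingale strong law plus Ces\`aro averaging (resting on Lemma~\ref{lem:io}), and finish with a gap argument at the limit. Your route uses heavier machinery but makes explicit what the paper glosses over --- that the transient bias from early, misleading observations is washed out of the posterior --- and it replaces the paper's informal $\sup/\inf$ over bandits (not really well defined under a preference ordering) with the well-defined scalar quantities $\rho_t(b)$; it also delivers the posterior limits that the later analysis in Theorem~\ref{thm:ms} implicitly relies on. Conversely, the paper's argument is more elementary (no law of large numbers needed) and lets approximate linearity itself neutralize the drift. One small hole to patch in your version: when the limit distribution $L_\infty$ is not a Dirac mass at $b_1$, you must still verify that $b_1$ maximizes $\rho_\infty(\cdot)$, and this is precisely where approximate linearity enters the \emph{convergence} claim rather than only the regret conversion --- the total ordering together with \eqref{eqn:al} give $\phi(b_1,b')>\phi(b_i,b')$ for every $b'$ and every $b_i\neq b_1$, hence $\rho_\infty(b_1)>\rho_\infty(b_i)$ after integrating over $L_\infty$. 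With that one line added, your sketch is complete at the same level of rigor as the paper's.
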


\begin{proof}
Let $D_t$ be the drifting but converging distribution and $D_t \rightarrow D$ as $t \rightarrow \infty$. Let $b_T$ be the drifting mean bandit of $D_T$ after $T$ iterations. Since $D_t$ is convergent, $\exists T > K$ such that 
$$\phi(\sup_{t > T} b_T,  \inf_{t > T} b_T) < \phi(b_1, b_2)$$
where $\phi(b_1, b_2)$ is the preference between the best two arms. 
The mean value of feedback by playing arm $i$ is $\phi(b_i, b_T)$. If $b_T$ is fixed, by Lemma\ref{lem:fixed}, Thompson sampling converges to the arm:
$i^* = \argmax_i \phi(b_i, b_T)$. For drifting $b_T$, define $b^+ = \sup_{t > T} b_T$ and $b^- = \inf_{t > T} b_T$.

Thompson sampling convergence to the optimal arm implies that:
$$\phi(b_1, b^+) > \phi(b_i, b^-)$$
for all $i \neq 1$. Consider:
$$\phi(b_1, b^+) - \phi(b_2, b^-)$$
$$ = \phi(b_1, b^+) - \phi(b_2, b^-) + \phi(b_1, b^-) - \phi(b_1, b^-)$$
$$ = \phi(b_1, b^-) - \phi(b_2, b^-) + \phi(b_2, b^+) - \phi(b_1, b^-)$$
$$ \geq \gamma\cdot[\phi(b_1, b_2) - \phi(b^+, b^-)] > 0$$
by approximate linearity.

So we have $\phi(b_1, b^+) > \phi(b_2, b^-)$. Since $\phi(b_2, b^-) > \phi(b_i, b^-)$ for $i > 2$. Then we have $$\phi(b_1, b^+) > \phi(b_i, b^-)$$
holds for all $i \neq 1$. So Thompson sampling converge to the optimal arm.
\end{proof}

\textbf{Theorem~\ref{thm:ms}.}
Under Approximate Linearity, \multisparring converges to the optimal arm with asymptotically optimal no-regret rate of $\mathcal{O}(K\ln(T)/\Delta)$.
Where $\Delta$ is the difference between the rewards of the best two arms.

\begin{proof}
Theorem~\ref{thm:conv} provides the convergence guarantee of \multisparring. Corollary~\ref{cor:conv} shows one-side convergence for playing against a converging distribution.

Since \multisparring converges to the optimal arm $b_1$ as running time $t\rightarrow \infty$: $\lim_{t\rightarrow \infty} \mathbb{P}(b_t = b_1) = 1$. For $\forall \delta>0$, there exists $C(\delta) > 0$ such that for any $t > C(\delta)$, the following condition holds w.h.p.:
$P(b_t = b_1) \ge 1-\delta$.

For the triple of bandits $b_1 \succ b_i \succ b_K$, Approximate Linearity guarantees:
$$\phi(b_i,b_K) < \phi(b_1,b_K) \leq \omega$$
holds for some fixed $\omega > 0$ and $\forall i \in \{2, \cdots, K-1\}$. With small $\delta$, the competing environment of any Player $p$ is bounded. If $\delta < \frac{\Delta}{\Delta + \omega}$,  $(1-\delta)\cdot(-\Delta) + \delta \cdot  \phi(b_2,b_K) < 0 = 1\cdot \phi(b_1,b_1)$. The competing environment can be considered as unbiased and the theoretical guarantees for Thompson sampling for stochastic multi-armed bandit is valid (up to a constant factor).

Then \multisparring has an no-regret guarantee that asymptotically matches the optimal rate of $\mathcal{O}(K\ln(T)/\Delta)$ up to constant factors, which proves Theorem~\ref{thm:ms}.

\end{proof}

\section{Further Experiments}
\label{app:exp}

\begin{figure*}[t]
\centering
\includegraphics[]{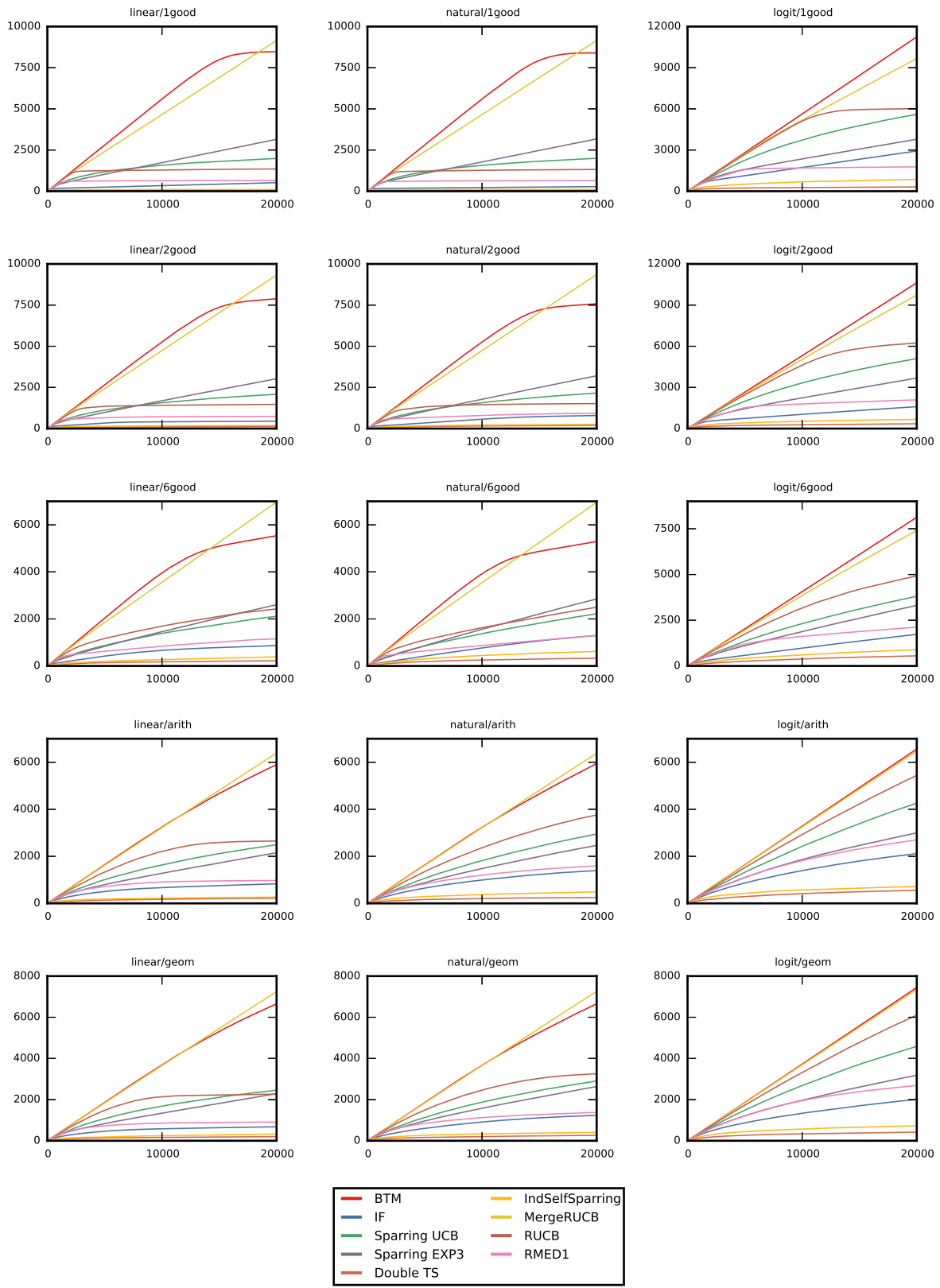}
\vspace{-0.1in}
\caption{Average regret vs iterations for each of 8 algorithms and 15 scenarios.}
\label{fig:grid}
\end{figure*}

\newpage
\begin{table*}[t]
\centering
\begin{tabular}{|l|l|}
\hline
Name  & Distribution of Utilities of arms \\ \hline
\hline
1good & 1 arm with utility 0.8, 15 arms with utility 0.2  \\ \hline
2good & 1 arm with utility 0.8, 1 arms with utility 0.7, 14 arms with utility 0.2  \\ \hline
6good & 1 arm with utility 0.8, 5 arms with utility 0.7, 10 arms with utility 0.2   \\ \hline
arith & 1 arm with utility 0.8, 15 arms forming an arithmetic sequence between 0.7 and 0.2 \\ \hline
geom  & 1 arm with utility 0.8, 15 arms forming a geometric sequence between 0.7 and 0.2 \\ \hline
\end{tabular}
\caption{16-arm synthetic datasets used for experiments.}
\label{tab:arms2}
\vspace{-0.1in}
\end{table*}

\end{document}